
\documentclass[letterpaper, 10 pt, conference]{ieeeconf}  

\usepackage{amsmath}
\usepackage{algorithm}
\usepackage{algorithmic}

\IEEEoverridecommandlockouts                              

\overrideIEEEmargins                                      


\usepackage{graphics} 
\usepackage{epsfig} 
\usepackage{epstopdf}
\usepackage{mathptmx} 
\usepackage{times} 
\usepackage{amsmath} 
\usepackage{amssymb}  
\usepackage{bm}
\usepackage{upgreek}
\usepackage{subfigure}
\usepackage{multirow}
\usepackage{float}
\usepackage{url}
\usepackage{textcomp}
\usepackage{color}

\newtheorem{theorem}{Theorem}
\newtheorem{lemma}{Lemma}
\newtheorem{procedure}{Procedure}
\newtheorem{definition}{Definition}
\newtheorem{note}{Note}

\newtheorem{approach}{Approach}

\title{Human Machine Co-Adaptation Model and Its Convergence Analysis}

\author{Steven W. Su$^1$ $^2$*, Yaqi Li$^1$, Kairui Guo$^2$ $^3$, Rob Duffield$^2$
\thanks{$^{*}$ The co-responding author.}
\thanks{$^{1}$ College of Medical Information and Artificial Intelligence, Shandong First Medical University
Shandong Academy of Medical Sciences.}
\thanks{$^{2}$University of Technology Sydney}
\thanks{$^{3}$South Western Sydney LHD; Liverpool Hospital.}
}

\begin{document}

\maketitle
\thispagestyle{empty}
\pagestyle{empty}

\begin{abstract}

The key to robot-assisted rehabilitation lies in the design of the human-machine interface, which must accommodate the needs of both patients and machines. Current interface designs primarily focus on machine control algorithms, often requiring patients to spend considerable time adapting. In this paper, we introduce a novel approach based on the Cooperative Adaptive Markov Decision Process (CAMDPs) model to address the fundamental aspects of the interactive learning process, offering theoretical insights and practical guidance. We establish sufficient conditions for the convergence of CAMDPs and ensure the uniqueness of Nash equilibrium points. Leveraging these conditions, we guarantee the system's convergence to a unique Nash equilibrium point. Furthermore, we explore scenarios with multiple Nash equilibrium points, devising strategies to adjust both Value Evaluation and Policy Improvement algorithms to enhance the likelihood of converging to the global minimal Nash equilibrium point. Through numerical experiments, we illustrate the effectiveness of the proposed conditions and algorithms, demonstrating their applicability and robustness in practical settings. The proposed conditions for convergence and the identification of a unique optimal Nash equilibrium contribute to the development of more effective adaptive systems for human users in robot-assisted rehabilitation.

\end{abstract}

\section{INTRODUCTION}

Robot-assisted rehabilitation involves complex sequential decision-making processes, typically modeled using Markov Decision Processes (MDPs) to optimize rehabilitation procedures \cite{stevanovic2022joint,zhang2022reinforcement,mukherjee2022survey}. Given the human feedback loop involved, reinforcement learning (RL) techniques are essential \cite{doll2012ubiquity}, with the reward prediction error (RPE) theory of dopamine (DA) function playing a key role in understanding learning mechanisms in such systems \cite{doll2012ubiquity}.

Although prior research provides valuable insights into several aspects of robot-assisted rehabilitation and human learning, there is limited understanding of the \textbf{convergence dynamics} and \textbf{numerical analyses} of these models in rehabilitation settings. This paper addresses this gap by introducing a novel approach based on the \textbf{Cooperative Adaptive Markov Decision Process (CAMDP)} model \cite{guo2024cooperative}, offering rigorous convergence analyses and numerical validation for the development of more effective rehabilitation strategies.

In the context of \textbf{Human-Machine Interfaces (HMI)}, optimizing the learning curve is crucial for enhancing the interaction between humans and machines. While machine-oriented RL research often focuses on improving cumulative reward and sample efficiency, integrating human agents into the learning loop introduces new challenges. Specifically, ensuring convergence in real-time, online settings is difficult when humans are actively involved in controlling the system. This paper takes an initial step towards developing \textbf{optimal policy algorithms} \cite{bai2019provably} that allow two agents (human and machine) to effectively switch between predefined low-level controllers to achieve convergence. This is explored through a framework based on \textbf{multi-agent cooperative game theory}.

Multi-Agent Reinforcement Learning (MARL) is divided into two categories: model-based and model-free \cite{yang2020overview}. In medical and clinical research, model-based MARL is more appropriate due to data scarcity, safety concerns, and the need for explainability \cite{moerland2020model}. Model-based RL provides an efficient approach for sequential decision-making, which is essential in robot-assisted rehabilitation, as it enables better data utilization, safety, and explainability.

The complexity of multi-agent learning, including nonstationarity and multi-dimensional objectives, often limits MARL algorithms to two-player settings. However, model-based MARL has shown advantages in handling these complexities by abstracting states and introducing temporal abstraction, allowing more efficient adaptation \cite{moerland2020model}. In our previous study \cite{guo2024cooperative}, we proposed a two-agent cooperative MDP (CaMDP) model to facilitate co-adaptation between humans and machines, particularly focusing on robot-assisted rehabilitation.

Achieving convergence in MARL is a challenging task, especially when multiple agents are involved \cite{bai2019provably}. To tackle this, we begin by exploring simpler scenarios involving two agents and finite MDPs \cite{qingji2008robot,wang2007emotion}, which helps us understand the adaptation dynamics before moving to more complex cases potentially requiring deep learning methods. Controlling model complexity in rehabilitation settings is essential to maintain interpretability, transparency, and avoid overfitting, often necessitating model reduction techniques like pruning \cite{maadi2021review,wang2022pruning}.

The CAMDP model serves dual purposes, similar to methodologies in control system design. It enables both the evaluation of the adaptation system through simulation and the high-level performance analysis of the overall system. Drawing parallels with control system stability analysis (such as Lyapunov methods), we use the reward function to analyze convergence in the co-adaptation process. Therefore, it is essential to investigate the convergence properties of the co-adaptive system within the CAMDP framework.

This study establishes \textbf{sufficient conditions for the convergence} of CAMDPs and ensures the uniqueness of Nash equilibrium points. By leveraging these conditions, we can confidently guarantee the system's convergence to a unique Nash equilibrium. We also explore cases with multiple Nash equilibrium points and develop strategies to adjust value evaluation and policy improvement algorithms to increase the likelihood of converging to the global optimal equilibrium. Through comprehensive numerical experiments, we demonstrate the robustness and applicability of these approaches in practical settings.

Our primary objective is to facilitate co-adaptation between two agents towards optimal policies. Although simulations initially involve centralized information for training, the two agents typically operate in a decentralized manner, presenting unique challenges. Prior studies in convergence analysis \cite{fleming1961convergence, lewis2012reinforcement, vamvoudakis2011non} emphasize the difficulty in formulating such games. This paper addresses these challenges by clearly defining the game structure and analyzing scenarios where agents act simultaneously or alternately. We also examine the potential for divergence due to the nonstationarity of simultaneous policy updates and explore feedback structures where one agent (e.g., the patient) has limited observations, leading to issues of partial observability.

Additionally, we investigate the conditions \cite{LOZOVANU201113398} \cite{Avrachenkov2012} to ensure convergence to the \textbf{optimal Nash equilibrium}, focusing on its uniqueness. Previous studies  \cite{sadhu2017improving, zhang2023global, leonardos2021global, song2019convergence, seierstad2014existence, chenault1986uniqueness, block2022existence, jank2003existence} have explored Nash equilibrium conditions in different contexts, including stochastic games and cooperative settings. While most studies focus on continuous games, fewer have addressed discrete-time games, which are more relevant to our investigation.

Building upon our previous work \cite{guo2024cooperative}, this paper presents a two-agent cooperative MDP model specifically designed for robot-assisted rehabilitation, defining the rules for human-machine co-adaptation and analyzing the convergence of this process. The main contributions of this work are summarized as follows:

\begin{enumerate}
	\item Establishment of sufficient conditions for CAMDP convergence and proof of the uniqueness of Nash equilibrium points, enhancing the theoretical framework for analyzing interactive learning in rehabilitation.
	\item Guaranteeing convergence to a unique Nash equilibrium, ensuring system stability and improved patient outcomes.
	\item Exploration of scenarios with multiple Nash equilibrium points and development of strategies to adjust policy improvement algorithms. These strategies, including less greedy approaches, were validated through numerical experiments, demonstrating their practical relevance and robustness.
\end{enumerate}

\section{The Co-operative MDPs Model for Co-adaptation}


Our previous research \cite{guo2024cooperative} on robot-assisted rehabilitation identified two key agents: Agent$_0$ (the patient) and Agent$_1$ (the robot), and explored their interactions within these frameworks. Due to the limited communication capacity between the human and the robot, the block diagram should follow a decentralized partially observable configuration, as shown in Fig. \ref{fig_1ob}.

Research in decentralized policies faces challenges such as partial observability and coordination issues. Approaches to address these challenges include distributed optimization \cite{boyd2011distributed}, game theory \cite{aumann1974cooperative}, and decentralized POMDPs \cite{oliehoek2016concise}, \cite{goldman2004decentralized}, \cite{pomdpssynthesis}, with recent proposals for learning schemes \cite{dobbe2017fully}.

In this paper, our primary focus is on convergence analysis, which involves complexities that could obscure an intuitive understanding of the conditions. Therefore, we adopted most of the model proposed in \cite{guo2024cooperative}, with some modifications. Specifically, we added rules for the two agents to follow based on the real-world context of robot-assisted rehabilitation. We assume that the states are directly accessible to either agent via sensors or designed observers.

\begin{figure}[ht]
  \centering
   \vspace*{-0.0in}~\\
   \hspace*{-0.0 in}
   \includegraphics[height=4.25in]{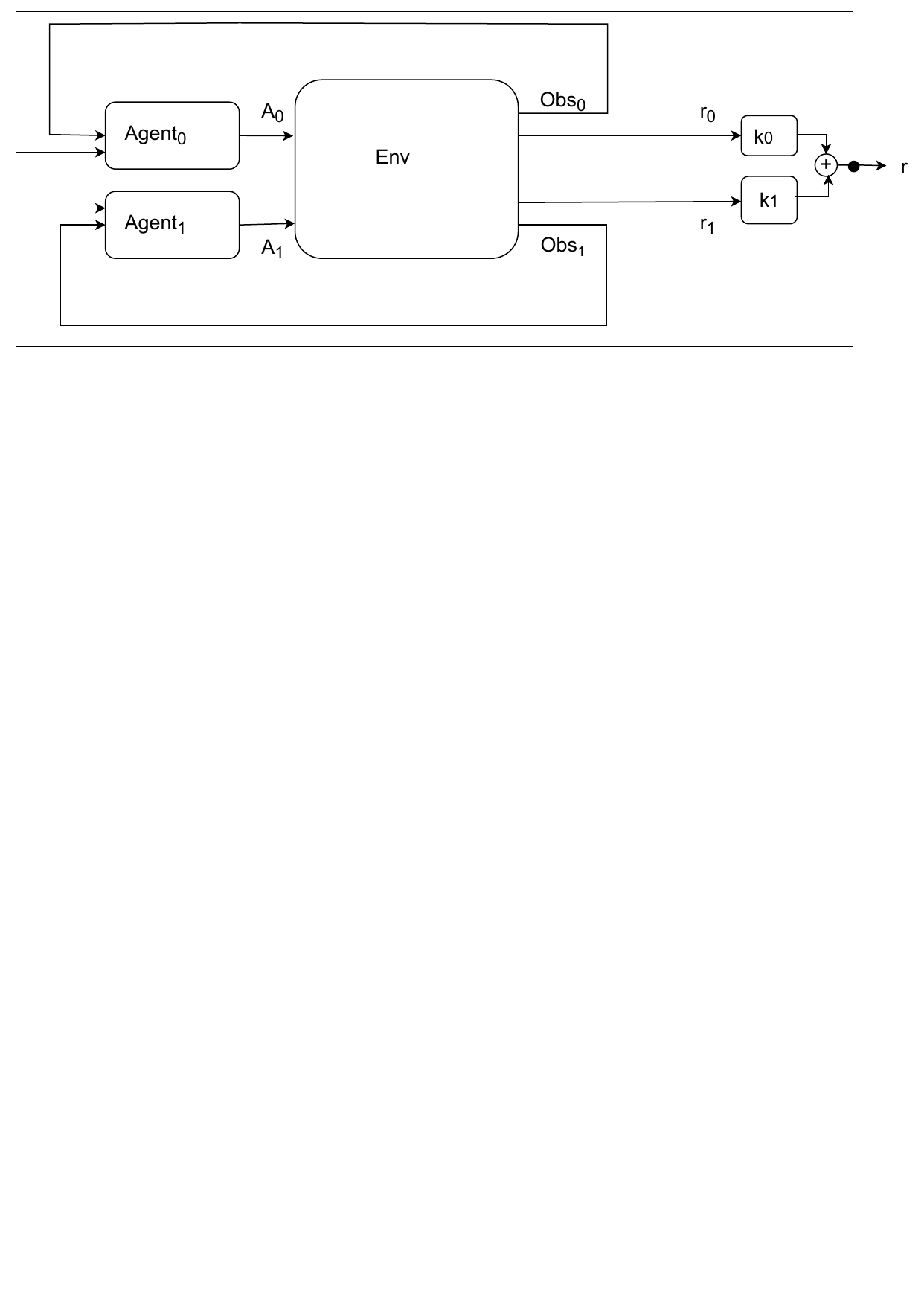}
   \vspace*{-3.0in}
\caption{The block diagram of the robot assisted rehabilitation in simulation setting.}\label{fig_1ob}
\end{figure}

In alignment with the arrangements described by \cite{goldman2004decentralized} and \cite{guo2024cooperative}, this study is confined to problems characterized by factored representations. Specifically, we examine three distinct sets of states: the first set, denoted as $s_{0_i}$ where $i \in \{1, 2, \cdots, Ns_0\}$, comprises states controlled exclusively by the patient; the second set, denoted as $s_{s_i}$ where $i \in \{1, 2, \cdots, Ns_s\}$, includes states influenced by both the robot and the patient; and the third set, denoted as $s_{1_i}$ where $i \in \{1, 2, \cdots, Ns_1\}$, consists of states controlled exclusively by the robot. We also define $\bar{Ns} = Ns_0 + Ns_s + Ns_1$.

We adopt the formal definition provided in \cite{guo2024cooperative}:

\begin{definition} \label{df_1}
A two agents MDPs is a Co-Adaptive MDPs (CAMDPs) system if the set $S$ of states can be separated into three components $S_0$, $S_1$, and $S_s$ such that:\\
$\forall$ $s_0,s'_0 \in S_0$, $\forall$ $s_s,s'_s \in S_s$, $\forall$ $s_1,s'_1 \in S_1$, we have\\
$Pr(s'_0|(s_0,s_s,s_1),a_0,a_1)=Pr(s'_0|s_0,a_0)$;\\
$Pr(s'_s|(s_0,s_s,s_1),a_0,a_1)=Pr(s'_s|s_s,a_0,a_1)$;\\
$Pr(s'_1|(s_0,s_s,s_1),a_0,a_1)=Pr(s'_1|s_1,a_1)$;\\
$Rw(s_0,a_0,a_1,(s'_0,s'_s,s'_1))=Rw(s_0,a_0,s'_0)$;\\
$Rw(s_s,a_0,a_1,(s'_0,s'_s,s'_1))=Rw(s_s,a_0,a_1,s'_s)$;\\
$Rw(s_1,a_0,a_1,(s'_0,s'_s,s'_1))=Rw(s_1,a_1,s'_1)$.

Furthermore, both the augmented transition probability $\bar P$ and the augmented reward function $\bar R$ of the CAMDPs can be represented as\\
$\bar P = P_0 \otimes P_s \otimes P_1$, where $P_0 = Pr(s'_0|s_0,a_0)$, $P_s = Pr(s'2|s2, a2)$, and $P_1 = Pr(s'2|s2, a2)$ and\\
$\bar R = R_0 \otimes R_s \otimes R_1$, where $R_0 = Rw(s_0,a_0,s'_0)$, $R_s = Rw(s_s,a_0,a_1,s'_s)$, and $R_1 = Rw(s_1,a_1,s'_1)$.
\end{definition}

\begin{note}
	In this study, to simplify our discussion, we assume the elements of the reward function matrices are all positive.  
\end{note}

This CaMDP model could be the abstraction of states and actions at the high level (see Fig. \ref{MRL_state} and Fig. \ref{MRL_input} directly from \cite{moerland2023model} ). 

\begin{figure}[ht]
\centering
\vspace*{-0.2in}~\\
\hspace*{-0.0 in}
\includegraphics[height=6.25in]{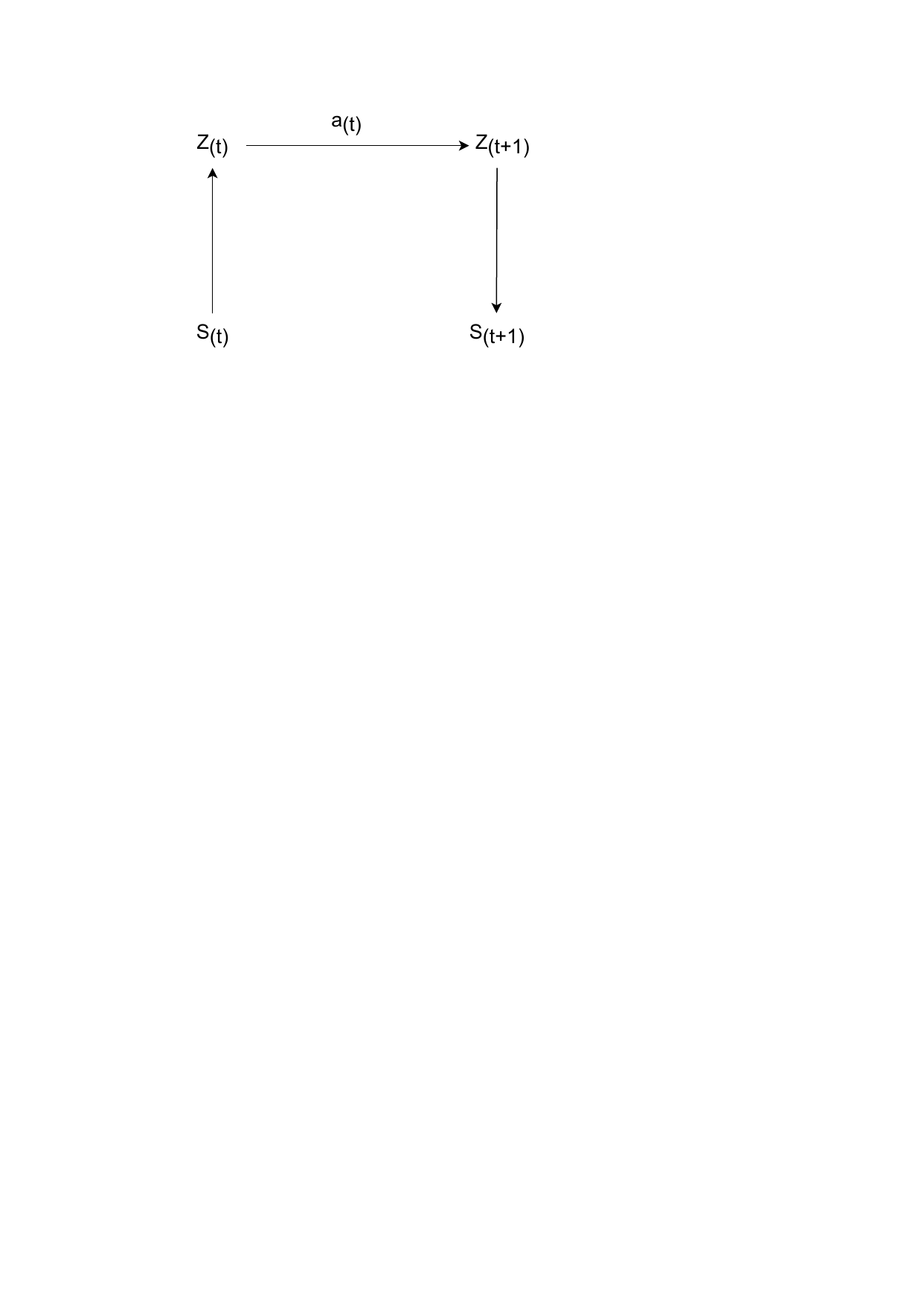}
\vspace*{-4.50in}
\caption{State abstraction: compress the state into a compact representation $z(t)$ and model the transition in this latent space (See Figure 2.4 in \cite{moerland2023model}).}\label{MRL_state}
\end{figure}

\begin{figure}[ht]
\centering
\vspace*{-0.0in}~\\
\hspace*{-0.4 in}
\includegraphics[height=8.25in]{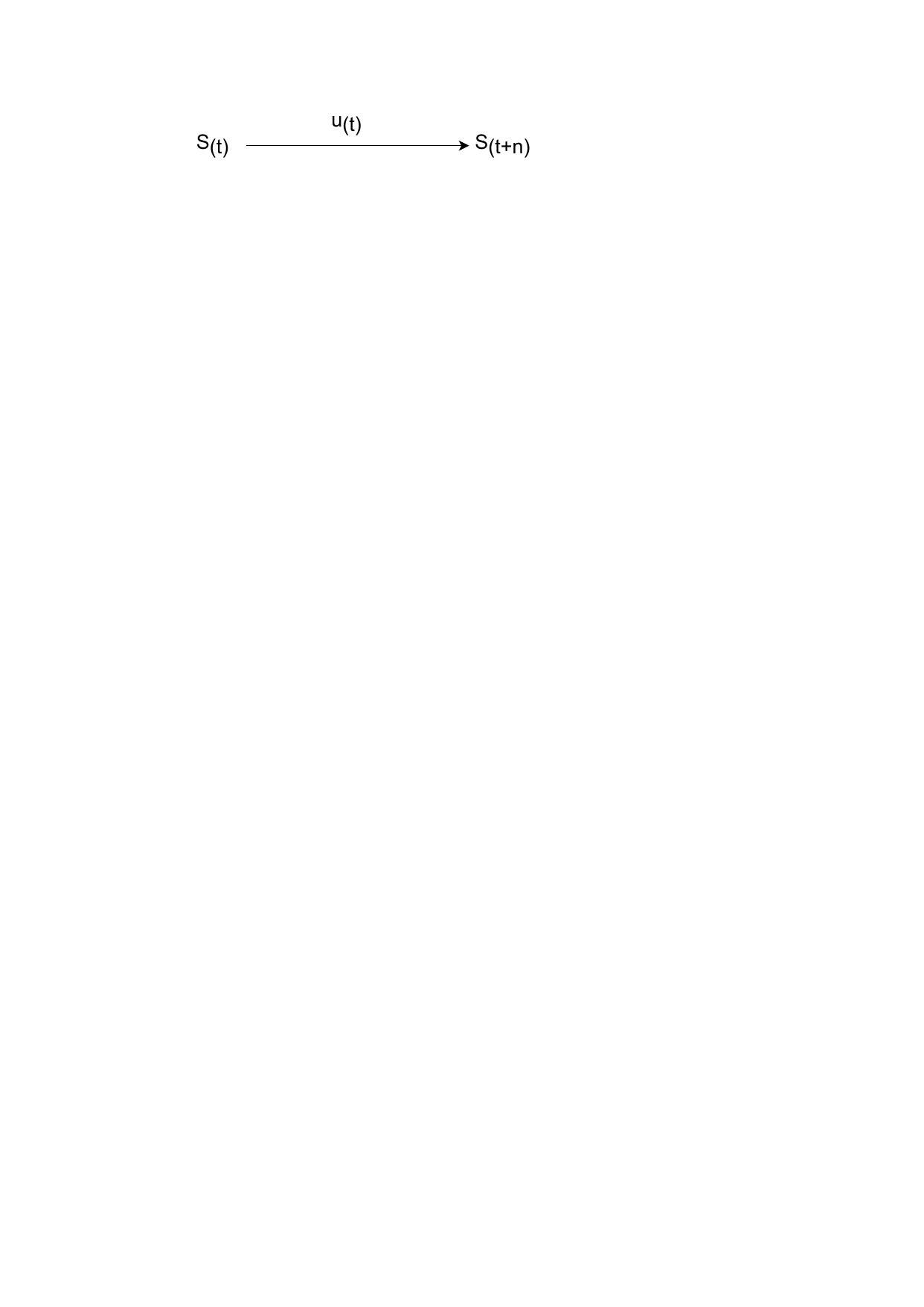}
\vspace*{-7.0in}
\caption{Temporal/action abstraction: better known as hierarchical reinforcement learning, where we learn an abstract action $u(t)$ that brings $s(t)$ to $s(t+n)$. Temporal abstraction directly
	implies multi-step prediction, as otherwise, the abstract action $u(t)$ is equal to the low-level action $a(t)$ (See Figure 2.5 in \cite{moerland2023model}).}\label{MRL_input}
	\end{figure}

When comparing two policies, we focus on their asymptotic value functions, which represent the long-term expected rewards under each policy. Accordingly, we introduce the following lemma.  

\begin{lemma} \label{Lm2} \cite{guo2024cooperative}
	For the CaMDPs model, assume the augmented probability transition matrix $\bar P$ is quasi-positive (i.e., irreducible and aperiodic stochastic). Then, for any two sub-control policies $\pi_0$ and $\pi_1$ (with $\pi = \{\pi_0, \pi_1\}$), if $\gamma \leq \gamma_0 < 1$, the value function $\bold{V}=[V(s_1), V(s_2), \cdots, V(s_{\bar N_s})]^T$ will converge to:
	$$
	\bold{V}=[\mathbf{I} - \gamma \bar{P}]^{-1} (diag(\bar{P} \bar{R}^T)) = \lim_{n \to \infty} \left( \sum_{i=0}^{n} \gamma^i \bar{P}^i \right) diag(\bar{P} \bar{R}^T),
	$$
where $\bar{R}$ is the augmented reward function, and $diag(A)$ represents the operation of extracting the diagonal elements of the square matrix $A$ and forming them into a column vector.
\end{lemma}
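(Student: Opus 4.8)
The plan is to recognize that, once both sub-policies $\pi_0$ and $\pi_1$ are fixed, the pair $\pi=\{\pi_0,\pi_1\}$ induces an ordinary single-chain Markov reward process whose transition kernel is exactly the augmented matrix $\bar P$ and whose expected one-step reward vector is $\mathbf{r}=\mathrm{diag}(\bar P\bar R^T)$. Indeed, $\bigl(\mathrm{diag}(\bar P\bar R^T)\bigr)_s=\sum_{s'}\bar P_{s,s'}\bar R_{s,s'}$ is precisely the expected immediate reward collected from state $s$, so the value function obeys the linear Bellman recursion $\mathbf{V}_{n+1}=\mathbf{r}+\gamma\bar P\mathbf{V}_n$, and the whole claim reduces to analyzing this fixed-point iteration.

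First I would unroll the recursion from an arbitrary initialization $\mathbf{V}_0$ to obtain
$$
\mathbf{V}_n=\Bigl(\sum_{i=0}^{n-1}\gamma^i\bar P^i\Bigr)\mathbf{r}+\gamma^n\bar P^n\mathbf{V}_0 .
$$
Two facts then have to be established: that the transient term vanishes, and that the partial geometric sum converges. Both rest on $\bar P$ being stochastic. Since $\bar P=P_0\otimes P_s\otimes P_1$ is a Kronecker product of row-stochastic matrices it is itself row-stochastic (because $(P_0\otimes P_s\otimes P_1)\mathbf{1}=\mathbf{1}$), so $\|\bar P^n\|_\infty=1$ for all $n$ and hence $\|\gamma^n\bar P^n\mathbf{V}_0\|_\infty\le\gamma^n\|\mathbf{V}_0\|_\infty\to 0$ because $\gamma\le\gamma_0<1$.

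Next I would control the series term via the spectral radius. A row-stochastic matrix has $\rho(\bar P)=1$: the all-ones vector is a right eigenvector for the eigenvalue $1$, and no eigenvalue can exceed the maximal row sum by Gershgorin's theorem; equivalently, the quasi-positivity hypothesis lets Perron--Frobenius identify $1$ as a simple, strictly dominant eigenvalue. Consequently $\rho(\gamma\bar P)=\gamma<1$, which simultaneously guarantees that $\mathbf{I}-\gamma\bar P$ is nonsingular and that the Neumann series converges, $\lim_{n\to\infty}\sum_{i=0}^{n-1}\gamma^i\bar P^i=(\mathbf{I}-\gamma\bar P)^{-1}$. Combining this with the vanishing transient term gives $\mathbf{V}=\lim_{n\to\infty}\mathbf{V}_n=(\mathbf{I}-\gamma\bar P)^{-1}\mathbf{r}=\bigl(\sum_{i=0}^{\infty}\gamma^i\bar P^i\bigr)\mathbf{r}$, which is exactly the stated pair of equivalent expressions.

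The main obstacle, and really the only nontrivial point, is pinning down $\rho(\bar P)=1$ so as to certify both the invertibility of $\mathbf{I}-\gamma\bar P$ and the convergence of the Neumann series; everything else is bookkeeping on a geometric iteration. Here the quasi-positivity (irreducibility and aperiodicity) assumption does more than is strictly needed for mere convergence, but it is convenient: Perron--Frobenius then delivers a simple dominant eigenvalue at $1$ and rules out cyclic behavior, making the bound $\gamma\rho(\bar P)=\gamma<1$ immediate and the limit independent of the initialization $\mathbf{V}_0$. I would note in passing that the positivity assumption on the entries of $\bar R$ plays no role in this particular convergence argument.
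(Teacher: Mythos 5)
The paper does not actually prove this lemma --- it is imported verbatim from \cite{guo2024cooperative} with only a citation --- so there is no in-paper argument to compare against. Your proof is correct and is the standard one: identifying $\mathrm{diag}(\bar P\bar R^{T})$ as the expected one-step reward vector, unrolling the Bellman recursion, killing the transient term via row-stochasticity of the Kronecker product, and invoking $\rho(\gamma\bar P)=\gamma<1$ to get invertibility of $\mathbf{I}-\gamma\bar P$ and convergence of the Neumann series. Your side remark is also accurate: quasi-positivity (irreducibility and aperiodicity) is not needed for this discounted result and only becomes essential later in the paper, where the undiscounted limit $\gamma\to1^{-}$ and the stationary matrix $\bar P^{\infty}$ are used; likewise the positivity of the reward entries plays no role here.
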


In this study, we investigate the process of searching for an optimal policy. First, we need to define how to compare two policies. Let $V_{ij}(s_k)$ denote the value of the policy pair $(\pi_{0i}, \pi_{1j})$, where $\pi_{0i}$ is the $i$-th policy for Agent$_0$ and $\pi_{1j}$ is the $j$-th policy for Agent$_1$, given the initial state $s_k$. We define policy $\pi = [\pi_{0i}, \pi_{1j}]$ to be better than policy $\pi' = [\pi_{0i'}, \pi_{1j'}]$ if there exists a small positive $\epsilon$, such that for $\pi \ne \pi'$:
\begin{equation} \label{comparison}
V_{ij}(s_k) - V_{i'j'}(s_{k'}) \ge \epsilon > 0, \quad \forall k, \,\, \forall k'
\end{equation}
This condition implies that for any initial state $s_k$ and $s_{k'}$, the value function under policy $\pi$ is strictly greater than the value function under policy $\pi'$.

Considering all initial states for comparison complicates the discussion. In practical scenarios, such as rehabilitation exercises, it is challenging to precisely select a particular initial state. Therefore, we prefer to analyze the value functions independent of the initial state, focusing more on the policy itself rather than the initial conditions. For example, we can use $V(i, j)$ to represent either $\max_{s_k} V_{ij}(s_k)$ or $\frac{1}{|S|} \sum_{s_k \in S} V_{ij}(s_k)$, without specifically defining the initial state, to simplify our analysis.

\begin{note}
In practical applications, and even in simulation studies, constructing a CaMDP with policies that satisfy Equation (\ref{comparison}) is often challenging. 

In numerical analysis, we can address this issue by increasing the value of $\gamma$. As $\gamma$ increases, more emphasis is placed on future rewards, which reduces the impact of the initial state.

In the limiting case, as $\gamma$ approaches $1$, the value function $\mathbf{V}$ becomes unbounded. In this scenario, rather than directly analyzing the value function $\mathbf{V}$, we consider the "average reward per transition" \cite{howard1960dynamic} for policy $\pi$, denoted by $\mathbf{g}^{\pi} = \frac{1}{n} \mathbf{V}$. Under the conditions of Lemma \ref{Lm2}, this will converge to:
\[
\mathbf{g}^{\pi} = \lim_{n \to \infty} \lim_{\gamma \to 1^{-}} \left( \frac{1}{n} \sum_{i=0}^{n} \gamma^i \bar{P}^i \right) \text{diag}(\bar{P} \bar{R}^T) = \bar{P}^{\infty} \text{diag}(\bar{P} \bar{R}^T),
\]
where all elements of the vector $\mathbf{g}^{\pi}$ are identical.

To be more specific, since $\bar{P}$ is quasi-positive (i.e., irreducible and aperiodic stochastic), the powers of $\bar{P}$ (i.e., $\bar{P}^i$ as $i \to \infty$) converge to the matrix denoted by $\bar{P}^{\infty}$, where each row is identical, representing the stationary distribution of the Markov chain.

Thus, by increasing $\gamma$, we can more accurately compare the overall value of the policies without interference from the influence of individual initial state values (for more detailed numerical simulation, see Subsection \ref{subsgamma}).

\end{note}

\section{The condition for the convergence of alternate policy improvements}

In this study, we expand on the previously defined CAMDPs by focusing on the policy convergence of two agents. We start by detailing the cooperative game settings, which form the basis of our analysis. Both agents follow the standard policy iteration approach, utilizing conventional value evaluation and policy improvement algorithms similar to those used in single-agent systems. Different with single agent setting, for two agents, we should consider two specific policy update rules: simultaneous updates and alternating updates.

Under the simultaneous rule, both agents adjust their policies concurrently, while the alternative rule entails one agent modifying its policy first and signaling the other agent to adapt once it has determined the best policy. This iterative process continues until the system either converges or oscillates in a fixed pattern. More formally, see the following two definitions:

\begin{definition}[Simultaneous Policy Update Rule] \label{SimutRule}
	In the simultaneous policy update rule, both agents \(\text{Agent}_0\) and \(\text{Agent}_1\) adjust their policies concurrently. Let \(\pi_0^k\) and \(\pi_1^k\) represent the policies of \(\text{Agent}_0\) and \(\text{Agent}_1\) at iteration \(k\). The simultaneous update rule is defined as follows:
	\[
	\pi_0^{k+1} = \arg\max_{\pi_0} \mathbb{E}_{\pi_0, \pi_1^k} \left[ V_0(s) \right]
	\]
	\[
	\pi_1^{k+1} = \arg\max_{\pi_1} \mathbb{E}_{\pi_0^k, \pi_1} \left[ V_1(s) \right]
	\]
	where \(\mathbb{E}_{\pi_0, \pi_1} \left[ V_i(s) \right]\) denotes the expected value of the value function \(V_i(s)\) for \(\text{Agent}_i\) under the policies \(\pi_0\) and \(\pi_1\).
\end{definition}

\begin{definition}[Alternating Policy Update Rule] \label{AlterRule}
	In the alternating policy update rule, one agent modifies its policy first, and then the other agent adapts its policy accordingly. Let \(\pi_0^k\) and \(\pi_1^k\) represent the policies of \(\text{Agent}_0\) and \(\text{Agent}_1\) at iteration \(k\). The alternating update rule is defined as follows:
	\begin{itemize}
		\item \textbf{Step 1: Agent$_0$ Update} \\
		\(\text{Agent}_0\) updates its policy:
		\[
		\pi_0^{k+1} = \arg\max_{\pi_0} \mathbb{E}_{\pi_0, \pi_1^k} \left[ V_0(s) \right]
		\]
		\item \textbf{Step 2: Agent$_1$ Update} \\
		\(\text{Agent}_1\) updates its policy based on the new policy of \(\text{Agent}_0\):
		\[
		\pi_1^{k+1} = \arg\max_{\pi_1} \mathbb{E}_{\pi_0^{k+1}, \pi_1} \left[ V_1(s) \right]
		\]
	\end{itemize}
	The process continues iteratively until the system either converges to a stable set of policies \((\pi_0^*, \pi_1^*)\) or exhibits a repeating pattern (oscillation).
\end{definition}

Our previous study \cite{guo2024cooperative} showed in the case of the simultaneous updating pattern, a higher likelihood of oscillations when compared to the alternative updating pattern. In order to be more focus, \textbf{we consider the  alternative updating pattern only in this study}.

\textcolor{black}{To streamline our discussion, we initially present the sufficient condition for the convergence of the alternative updating pattern via the following theorem (Theorem \ref{Th_001}). Our focus is on CAMDPs, where one agent, following the traditional Policy Improvement algorithm, can determine policy improvement based only on part of the state information. In this setup, the optimized action based on the remaining state information may align with or differ from the first. We establish that a sufficient condition for system convergence is when both policy improvements yield the same action.}

\begin{theorem} \label{Th_001}
	%
	
	Assuming that both agents of the CAMDPs, as defined in Definition \ref{df_1}, operate according to the standard policy iteration procedure (i.e., the standard Value Evaluation procedure and Policy Improvement Procedure). Furthermore, assume that the two agents adjust their policies alternately (see Definition \ref{AlterRule}). In such cases, the CAMDPs will converge to one of the Nash equilibria of the system if the following condition is satisfied:
	
	The policy updating for Agent$_0$ is only determined by states ${S_0}$ and ${S_s}$, which is not influenced by the value of state ${S_1}$. Similarly for Agent$_1$ its policy is only determined by states ${S_1}$ and ${S_s}$, which is not influenced by the value of state ${S_0}$. This implies the assumption that a conventional policy improvement algorithm based on partial information is sufficient to make decisions as if full state information were available.
\end{theorem}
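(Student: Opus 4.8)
The plan is to prove convergence by a monotone potential (coordinate-ascent) argument built on the factored structure of Definition \ref{df_1} together with the decoupling hypothesis. The central observation is that, since $P_0$ and $R_0$ depend only on $(s_0,a_0)$ while $P_1$ and $R_1$ depend only on $(s_1,a_1)$, the action $a_0$ of Agent$_0$ has no effect on the transitions or rewards associated with the block $S_1$, and symmetrically $a_1$ has no effect on those associated with $S_0$; the only coupling between the two agents is the shared block $S_s$, whose $P_s$ and $R_s$ depend on both actions. First I would make this invariance precise at the level of the asymptotic value of Lemma \ref{Lm2}: a unilateral change of $\pi_0$ leaves the portion of $diag(\bar P \bar R^T)$ and of the propagated value attributable to $S_1$ unchanged, and symmetrically a change of $\pi_1$ leaves the $S_0$ portion unchanged. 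The hypothesis of the theorem enters exactly here — it guarantees that the greedy action returned by Agent$_0$'s partial-information policy improvement, using only the $(S_0,S_s)$ values, coincides with the action a full-information improvement would return, and likewise for Agent$_1$ on $(S_1,S_s)$.

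With this in hand I would exhibit a global potential $\Phi(\pi_0,\pi_1)$, taken as the initial-state-independent aggregate value $\frac{1}{|S|}\sum_{s_k \in S} V(s_k)$ introduced just before the theorem, and show it is non-decreasing along the alternating scheme of Definition \ref{AlterRule}. For Step 1, fixing $\pi_1=\pi_1^k$ and viewing the composite system as a single-agent MDP for Agent$_0$, the standard policy-improvement theorem gives that the greedy update $\pi_0^{k+1}$ does not decrease Agent$_0$'s value; because the $S_1$ contribution is invariant under this update, the aggregate $\Phi$ cannot decrease. Step 2 is symmetric: fixing $\pi_0=\pi_0^{k+1}$, Agent$_1$'s greedy update does not decrease its value while the $S_0$ contribution stays fixed, so again $\Phi$ does not decrease. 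Hence $\Phi(\pi_0^k,\pi_1^k)$ is monotone non-decreasing over the whole sequence of alternating updates.

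The argument then closes by finiteness. A policy-changing greedy update is pointwise value non-decreasing and strictly increasing at some state, so — the complementary private block of the aggregate being invariant — each such update strictly increases $\Phi$; since the value is bounded for $\gamma \le \gamma_0 < 1$ and there are only finitely many deterministic joint sub-policies, strict increases can occur only finitely often. The sequence therefore reaches, in finitely many steps, a fixed point $(\pi_0^*,\pi_1^*)$ at which neither greedy step alters the current policy. Consequently $\pi_0^*$ is a best response to $\pi_1^*$ and $\pi_1^*$ is a best response to $\pi_0^*$ — exactly the definition of a Nash equilibrium — which establishes the claim.

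I expect the main obstacle to be the precise bookkeeping that underlies the monotonicity in Step 2: one must verify that Agent$_1$'s re-optimization of the shared block $S_s$, acting through $P_s$ and $R_s$, genuinely leaves Agent$_0$'s private $S_0$ contribution untouched and does not trade it off against the shared term, so that the correct complementary slice of $\Phi$ is invariant. This is where the factored independence of $S_0$ from $a_1$ must be combined carefully with the decoupling hypothesis; without the hypothesis a partial-information update could select an action that lowers the aggregate value, producing precisely the oscillation that the simultaneous scheme was observed to exhibit and that the theorem is designed to preclude.
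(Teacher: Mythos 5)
Your proposal is correct and follows essentially the same route as the paper's own proof: both arguments reduce each alternating step to a single-agent policy improvement on the composite MDP (so the value vector is monotone non-decreasing, via the non-negativity of $[\mathbf{I}-\gamma \mathbf{P}^{\pi^n}]^{-1}\mathbf{g}$), and both invoke the stated condition precisely to identify the partial-information greedy update with the centralized full-information one. The only difference is that you make the termination step explicit (finitely many deterministic policies plus strict increase of the aggregate value), which the paper leaves implicit in ``until a local optimal policy is reached.''
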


\begin{proof}

	The proof strategy aims to transform the convergence problem of the two agents into a single-agent convergence problem. To achieve this, we consolidate the two agents into one entity by merging states, transfer functions, and actions. This consolidation allows us to construct an integrated policy based on all available state information, facilitating the determination of optimal actions. To establish a fully structured policy, we assume the availability of the overall possibility transformation matrix and rewards matrices within the simulation setting, or they objectively exist.
	
	However, it's important to note that from the perspective of the two agents in the CAMDPs, each agent may not possess full awareness of, or must disregard, changes in certain states. Nevertheless, if the condition specified in the theorem is met, the omission of state information does not impact the agents' policy adjustments. Essentially, this is equivalent to having full access to all the information collectively. With these foundations in place, we proceed to present a detailed proof of the theorem.
	
	
	We now treat the overall CAMDPs as a single-agent MDP, and its probability transition matrices and reward matrices under policies $\pi_0$ and $\pi_1$ can be constructed based on those of the sets of states $S_0$, $S_s$, and $S_1$. Without loss of generality, we consider the case where $\pi_1$ is fixed on its $j$-th policy, denoted as $\pi_1^j$. It is now time for the policy improvement of Agent$_0$.
	
	Suppose the current policy for Agent$_0$ is $\pi_0^m$, and under the classical policy improvement procedure, the selected policy becomes $\pi_0^n$. We denote the augmented policies for the overall system as follows:
	\begin{equation*}
		\begin{array}{cc}
			\pi^m=\{\pi_0^m, \pi_1^j\} & \pi^n=\{\pi_0^n, \pi_1^j\}.
		\end{array}
	\end{equation*}
	
	Let's clarify that the construction outlined above is based on the assumption that all state information and corresponding actions are designed within a centralized configuration. In this setup, both Agent$_0$ and Agent$_1$ adjust their policies based on all available state information. Subsequently, following the classical policy improvement procedure \cite{howard1960dynamic}, as we have chosen $\pi_0^n$ over $\pi_0^m$, we arrive at the following:
	
	$$
	r_i^{\pi^n} + \gamma \sum_{j=1}^N p_{ij}^{\pi^n} v_j^{\pi^m} \ge
	r_i^{\pi^m} + \gamma \sum_{j=1}^N p_{ij}^{\pi^m} v_j^{\pi^m}.
	$$
	where $r$ is the reward function; $p_{ij}^{\pi}$ represents the element in the $i$-th row and $j$-th column of the overall probability transition matrix under policy $\pi$; $i \in {1, 2, \cdots, N}$; and $N$ is the total number of states in the augmented system.
	
	For the two combined policies $\pi^m$ and $\pi^n$ and each state $s_i$, we have the following equations:
	
	\begin{equation} \label{eq0a}
		\begin{array}{cc}
			v_i^{\pi^m} &= r_i^{\pi^m} + \gamma \sum_{j=1}^N p_{ij}^{\pi^m} v_j^{\pi^m}, \\
			v_i^{\pi^n}& = r_i^{\pi^n} +\gamma \sum_{j=1}^N p_{ij}^{\pi^n} v_j^{\pi^n}.
		\end{array}
	\end{equation}
	
	Considering policy improvement procedure, we define for each particular state $s_i$:
	\begin{equation} \label{eq1a}
		g_i=r_i^{\pi^n} + \gamma \sum_{j=1}^N p_{ij}^{\pi^n} v_j^{\pi^m} -
		r_i^{\pi^m} - \gamma \sum_{j=1}^N p_{ij}^{\pi^m} v_j^{\pi^m}
	\end{equation}
	It is clear that $\forall i, g_i \ge 0$.
	Furthermore, under the policies $\pi^m$ and $\pi^n$, based on both equations (\ref{eq0a}) and (\ref{eq1a}), we have
	\begin{equation}
		v_i^{\pi^n}-v_i^{\pi^m}=g_i + \gamma \sum_{j=1}^{N} p_{ij}^{\pi^n} (v_j^{\pi^n}-v_j^{\pi^m}).
	\end{equation}
	Defining $\delta v_i = v_i^{\pi^n}-v_i^{\pi^m}$, we have
	\begin{equation}
		\delta v_i = g_i + \gamma \sum_{j=1}^{N} p_{ij}^{\pi^n} \delta v_j.
	\end{equation}
	As it assumed the overall CAMDPs is ergodic under all policies, we know the vector solution of the above equation is as follows:
	\begin{equation}
		\bold{\delta V} = [\bold{I} - \gamma \bold{P}^{\pi^n}]^{-1} \bold{g}.
	\end{equation}
	It can be seen that  $\bold{\delta V} \ge 0$ as the elements of $[\bold{I} - \gamma \bold{P}^{\pi^n}]^{-1}=\sum_{k=0}^{+\infty}\gamma^k {(\bold{P}^{\pi^n})}^k$ are all {\bf non-negative}. That is the policy is continiously improve, from the Agent$_0$ aspect, until a local optimal policy reached.
	
	It's essential to note that we're operating within a simulation environment, where the overall configuration follows a centralized pattern. However, in real clinical trial scenarios, agents only respond to changes in states they have access to. In other words, if the agents possess partial knowledge of the state information (indicating a partially observable system), the previous proof may not hold.  Nevertheless, the condition that the policy updating for Agent$_0$ is only determined by states ${S_0}$ and ${S_s}$, which is not influenced by the value of state ${S_1}$ assumption can still achieve the same policy improvement effect as in the centralized configuration. This conclusion finalizes the proof.
	
	
	
\end{proof}

\begin{note}
Theorem \ref{Th_001} provides a sufficient condition to ensure the same policy improvement effect as in the centralized configuration. Because this condition is sufficient, the system may still remain stable even if it is not always satisfied. In Section \ref{nu_e} (Subsection \ref{subs_convergence}), we will randomly generate 1000 CaMDP models to investigate how this sufficient condition relates to the convergence or oscillations of the CaMDP systems.

\end{note}

\section{The Condition for the Uniqueness of Nash Equilibrium} 

In the previous section, we provide a sufficient condition to ensure the convergence of the CAMDPs. However, this condition alone still cannot ensure the policies of the two agents finally converge to $optimal$ policy. The reason is the CAMDPs might have multiple Nash equilibrium, for which the two agents might converge to.

In this section, we investigate the conditions for the uniqueness of the Nash equilibrium. 
To simplify our discussion and in line with Lemma \ref{Lm2}, we assume that a sufficiently large $\gamma$ has been selected, ensuring that the values for different states differ enough to be considered approximately state-independent.
Thus, the value of each policy can be regarded as independent of the states. In the following discussion, we use $V(i, j)$ to represent the value of the $i$-th policy for Agent$_0$ and the $j$-th policy for Agent$_1$, without specifically linking it to any particular state.

\begin{lemma} \label{Th_003}
	Assume that both agents of the CAMDPs, as defined in Definition \ref{df_1}, operate according to the standard policy iteration procedure (i.e., the standard Value Evaluation procedure and Policy Improvement Procedure), and satisfy all the conditions of Theorem \ref{Th_001}. Assume the values calculated according to the standard Value Evaluation procedure are $V(i, j)$ where $i \in \{1,2,\cdots, I\}$, $j \in \{1, 2, \cdots, J\}$, $I$ is the number of the total policy of Agent$_0$ and $J$ is the number of the total policy of Agent$_1$. Further assume
	\begin{equation*}
		| V(i,j)-V(i',j') |= \epsilon > 0.
	\end{equation*}
	where $i \ne i'$ or $j \ne j'$.
	
	Then, the maximum number of Nash Equilibrium of the system can be estimated according the following three steps:
	
	1) In the value matrix $[V(i, j)]$, for the $k_{th}$ column, if it is possible to find any element in this column its value is the highest in its row, i.e., if there exist an item $V(m,k)$, such that
	$$V(m,k)=\max_{l}\{V(m, l)\}$$
	then we call this column as a row dominant column. Similarly, we can define a row of the value matrix $[V(i, j)]$ is a column dominant row. That is, for the $k_{th}$ row, if it is possible to find a element (say $V(k,n)$) in this row such that
	$$V(k,n)=\max_{l}\{V(l, n)\},$$
	then this row is a column dominant row.
	
	2) For the value matrix $[V(i, j)]$, count the total number of row dominant columns, and denote it as $N_{DC}$. Similarly, count the total number of column dominant rows, and denote it as  $N_{DR}$.
	
	3) The maximum number of Nash Equilibrium of the system can be calculated as:
	\begin{equation}\label{eqi}
		\bar N_{NE}=\min\{N_{DC}, N_{DR}\}.
	\end{equation}
	
\end{lemma}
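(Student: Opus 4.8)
The plan is to first pin down precisely what a Nash equilibrium means for the reduced value matrix $[V(i,j)]$, then recast the counting problem as a purely combinatorial statement about row- and column-maxima, and finally bound the count by $\min\{N_{DC}, N_{DR}\}$. Throughout I index rows by Agent$_0$'s policies and columns by Agent$_1$'s policies, as in the statement.

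First I would translate the best-response conditions of the alternating update rule (Definition \ref{AlterRule}) into matrix language. Because the values are treated as state-independent here (via the large-$\gamma$ assumption) and both agents improve according to the common scalar $V(i,j)$, Agent$_0$'s best response to a fixed column $j$ selects the row attaining $\max_i V(i,j)$, while Agent$_1$'s best response to a fixed row $i$ selects the column attaining $\max_j V(i,j)$. A pair $(i^*,j^*)$ is therefore a Nash equilibrium exactly when $V(i^*,j^*)$ is simultaneously the maximum of its column $j^*$ and the maximum of its row $i^*$. This is the crucial reformulation: the equilibria are precisely the entries that dominate both along their row and along their column.

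Second I would exploit the separation hypothesis $|V(i,j)-V(i',j')| = \epsilon > 0$, which forces all entries to be distinct. Distinctness guarantees a unique maximizing column in each row and a unique maximizing row in each column. Hence a single row can contain at most one entry that is its own row-maximum, and therefore at most one Nash equilibrium; the same holds column-wise. Consequently the set of equilibria is a partial permutation: the maps $(i^*,j^*)\mapsto i^*$ and $(i^*,j^*)\mapsto j^*$ are both injective. Third, I would connect these injections to the dominance counts. Every equilibrium, being a row-maximum, lies in a row-dominant column in the sense of Step~1, and since equilibria occupy distinct columns, their number cannot exceed the count $N_{DC}$ of row-dominant columns. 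Symmetrically, every equilibrium is a column-maximum and thus lies in a column-dominant row, so their number is at most $N_{DR}$. Taking the sharper bound yields that the number of Nash equilibria is at most $\min\{N_{DC},N_{DR}\}=\bar N_{NE}$, which is the claimed estimate.

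The main obstacle I anticipate is not the combinatorial counting, which is short, but justifying the equilibrium characterization in the first step: one must argue carefully that, under the conditions inherited from Theorem \ref{Th_001}, the alternating best responses of the two agents genuinely reduce to taking column- and row-maxima of a single scalar value $V(i,j)$, so that the game-theoretic equilibrium notion coincides with the matrix-theoretic ``row-and-column maximum'' condition. A secondary subtlety is that $N_{DC}$ and $N_{DR}$ count columns and rows containing \emph{at least one} such maximum, so several row-maxima may share a column; I would note that this can only make $N_{DC}$ smaller than $I$ and therefore tightens rather than weakens the bound. Finally, if one wishes $\bar N_{NE}$ to be attained rather than merely an upper bound, I would sketch a matrix in which every row-dominant column and column-dominant row is simultaneously matched by a common dominating entry, exhibiting tightness of the estimate.
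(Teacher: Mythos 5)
Your proposal is correct and follows essentially the same route as the paper's (very terse) proof: Nash equilibria are exactly the entries that are simultaneously row- and column-maxima, hence each lies in a row-dominant column and a column-dominant row, giving the bound $\min\{N_{DC},N_{DR}\}$. Your write-up is in fact more complete than the paper's, since you supply the injectivity step (distinctness of entries forces at most one equilibrium per row and per column) that is needed to pass from ``equilibria lie in dominant columns/rows'' to the actual cardinality bound, a step the paper leaves implicit.
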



%
%

%

\begin{proof}
	Under this assumption, the Nash equilibria should lie either in the row-dominant columns or the column-dominant rows. Therefore, it is evident that the total number of Nash equilibria must be less than or equal to either the number of row-dominant columns or the number of column-dominant rows. This conclusion proves Equation (\ref{eqi}).
\end{proof}

\begin{theorem} \label{Th_002}
	Assume that both agents of the CAMDPs, as defined in Definition \ref{df_1}, operate according to the standard policy iteration procedure (i.e., the standard Value Evaluation procedure and Policy Improvement Procedure), and satisfy all the conditions of Theorem \ref{Th_001}. Assume the values calculated according to the standard Value Evaluation procedure are $V(i, j)$ where $i \in \{1,2,\cdots, I\}$, $j \in \{1, 2, \cdots, J\}$, $I$ is the number of the total policy of Agent$_0$ and $J$ is the number of the total policy of Agent$_1$. Further assume
	\begin{equation*}
		| V(i,j)-V(i',j') |= \epsilon > 0.
	\end{equation*}
	where $i \ne i'$ or $j \ne j'$.
	Then, accodring to the operation rules, the system will reach the optimal policy within at most two round if it exist a number $m^{\star}$ (or $n^{\star}$) such that
	\begin{equation}\label{maxC}
		V(m^{\star}, j)=\max_{m \in \{1,2,\cdots, I\}} {V(m,j)} \,\,\,\,\,\,  (\forall j)
	\end{equation}
	or
	\begin{equation}\label{maxR}
		V(i, n^{\star})=\max_{n\in \{1,2,\cdots, J\}} {V(i,n)} \,\,\, \,\,\,  (\forall i).
	\end{equation}
	
\end{theorem}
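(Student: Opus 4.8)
The plan is to reduce the alternating policy-iteration dynamics to a best-response dynamics on the scalar value matrix $[V(i,j)]$ and then to track, step by step, where the two agents land. By the conditions inherited from Theorem \ref{Th_001}, each agent's partial-information policy improvement produces exactly the same update as a full-information greedy step, and the monotone-improvement argument established in the proof of Theorem \ref{Th_001} (i.e. the inequality $\delta V \ge 0$ obtained there) guarantees that the full best-response update of Definition \ref{AlterRule} drives an agent to the global optimum of its own slice: when Agent$_0$ updates against a fixed policy $\pi_1^j$ it moves to the row maximizing $V(\cdot, j)$, and symmetrically Agent$_1$ moves to the column maximizing $V(i, \cdot)$. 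Because the assumption $|V(i,j) - V(i',j')| = \epsilon > 0$ forces all entries to be distinct, each of these argmax operations is single-valued, so the resulting dynamics are deterministic.

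I would then split into the two cases of the hypothesis. Suppose first that a row $m^{\star}$ satisfies (\ref{maxC}), i.e. $V(m^{\star}, j) = \max_m V(m, j)$ for every $j$. Since Agent$_0$ moves first (Definition \ref{AlterRule}), its Step-1 update sends it to $\arg\max_m V(m, j) = m^{\star}$ regardless of the current column $j$; then Agent$_1$'s Step-2 update sends it to $n^{\star} := \arg\max_n V(m^{\star}, n)$. I claim $(m^{\star}, n^{\star})$ is the global maximizer of $V$: the row-dominance $V(m^{\star}, j) \ge V(i, j)$ for all $i, j$ gives $\max_{i,j} V(i,j) = \max_n V(m^{\star}, n) = V(m^{\star}, n^{\star})$. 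This entry is also a fixed point of the dynamics, since Agent$_0$'s best response to $n^{\star}$ is $\arg\max_i V(i, n^{\star}) = m^{\star}$ (again by row-dominance) and Agent$_1$'s best response to $m^{\star}$ is $n^{\star}$ by construction; hence it is a Nash equilibrium and convergence occurs within a single round.

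For the symmetric case, suppose instead that a column $n^{\star}$ satisfies (\ref{maxR}), i.e. $V(i, n^{\star}) = \max_n V(i, n)$ for every $i$. Here the dominant structure lies on Agent$_1$'s side, so one extra half-cycle is needed precisely because Agent$_0$ acts first: starting from any $(i, j)$, Agent$_0$'s first update yields some row $i_1$, after which Agent$_1$'s update sends it to $n^{\star}$ by column-dominance. In the second round Agent$_0$ then moves to $m^{\star\star} := \arg\max_i V(i, n^{\star})$ while Agent$_1$ stays at $n^{\star}$, and the same dominance argument shows $(m^{\star\star}, n^{\star})$ is the global maximizer and an absorbing Nash equilibrium, so the process terminates after at most two rounds. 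The main obstacle I anticipate is the bookkeeping forced by the fixed update order: the row case closes in one round whereas the column case requires two, and in each case one must verify that the attained entry is simultaneously the global optimum and an absorbing fixed point (so the system does not subsequently drift away), which is exactly what the dominance hypotheses (\ref{maxC})–(\ref{maxR}) supply.
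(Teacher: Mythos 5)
Your proposal is correct and follows essentially the same route as the paper: invoke Theorem \ref{Th_001} so that each alternating update is a full best response on the corresponding row or column of $[V(i,j)]$, then use the dominance hypothesis (\ref{maxC}) or (\ref{maxR}) to show the iterates land on the global maximizer, which is an absorbing Nash equilibrium. Your treatment is in fact slightly more complete than the paper's, which dispatches the column-dominant case with ``it is similar,'' whereas you explicitly track the extra half-cycle forced by Agent$_0$ moving first and verify the fixed-point property.
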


\begin{proof}
	Based on Theorem \ref{Th_001}, for any initial value $V(i,j)$, when it is the due for Agent$_0$ to adjust its policy, the system will converge to the value $V(m^{\star}, j)=\max_{m \in \{1,2,\cdots, I\}} {V(m,j)}$.
	
	Next, when it is the turn for Agent$_1$ to tune its policy. Again, according to Theorem \ref{Th_001}, the system will converge to the optimal value:
	$$
	V(m^{\star}, n^{\star})=\max_{n \in \{1,2,\cdots, J\}} {V(m^{\star},n)}.
	$$
	
	As it is assumed that $V(m^{\star}, j)=\max_{m \in \{1,2,\cdots, I\}} {V(m,j)}$  ($\forall j$), we have
	$$
	V(m^{\star}, n^{\star})=\max_{\{m, n\}} {V(m,n)}.
	$$
	
	It is similar for the case when it is due for Agent$_1$.
\end{proof}

\begin{note}
	Based on Lemma \ref{Th_003}, either Condition (\ref{maxC}) or Condition (\ref{maxR}) can ensure the uniqueness of the Nash equilibrium. This is because Condition (\ref{maxC}) implies $N_{DR} = 1$, while Condition (\ref{maxR}) implies $N_{DC} = 1$.
\end{note}

\begin{note}
In Section \ref{nu_e} (Subsection \ref{subs_convergence}), we will randomly generate 1000 CaMDP models to investigate to see if these sufficient conditions of Theorem \ref{Th_002} can ensure the convergence of the CaMDP models.
\end{note}

\section{The less greedy policy improvement approach for Agent$_1$}



In the previous sections, we investigated the conditions required to ensure the two agents in the CAMDP system converge to their optimal policies under standard policy iteration procedures. However, these conditions can be overly restrictive in some cases. To address this, the following sections will introduce revised policy improvement procedures to increase the chance for the agents converge to either optimal or sub-optimal policies.

In particular, we will mainly deal with the multiple NE cases. For MARL, one major problem is still similar to its single-agent counterpart, the exploitation and exploration. In the previous sections, we considered the traditional Policy Iteration approach with a greedy search strategy, which is mainly for exploitation. Also, to reduce the uncertainty of convergence, we avoid using simultaneous-greedy methods, such as simultaneous policy improvement rather than alternative adjustment approaches. All the mentioned approaches often lead to convergence to locally optimal solutions. In the following, to avoid frequent policy switching for patient, we will use either traditional Policy Improvement algorithm or Revised Policy Improvement Procedure proposed in \cite{guo2024cooperative} for Agent$_0$, while employing a less greedy search strategy for Agent$_1$. This not only reduces the switching cost for Agent$_0$, but also helps mitigate convergence to locally optimal policies (See Algorithm \ref{alg:coop_mdp}).

\begin{algorithm} 
	\caption{Cooperative Multi-Agent MDP Optimization with Less Greediness}
	\label{alg:coop_mdp}
	\begin{algorithmic}
		\STATE Initialize policies $\pi_0$ and $\pi_1$
		\STATE Initialize value functions $V_0$ and $V_1$ (here, for cooperative manipulation, we set $V_0 = V_1$)
		\STATE Set discount factor $\gamma = 0.9$ and convergence threshold $\theta = 10^{-6}$
		\STATE Set maximum number of iterations $max\_iterations = 1000$
		\STATE Set exploration rate $\epsilon = 0.1$
		
		\FOR{iteration = 1 to $max\_iterations$}
		\STATE \textbf{Policy Evaluation for Agent$_0$}
		\REPEAT
		\STATE $\delta \leftarrow 0$
		\FOR{each state $s \in S$}
		\STATE $v \leftarrow V_0(s)$
		\STATE $V_0(s) \leftarrow \max\limits_{a \in actions_1} \sum\limits_{s', r} p(s', r|s, a) [r + \gamma V_0(s')]$
		\STATE $\delta \leftarrow \max(\delta, |v - V_0(s)|)$
		\ENDFOR
		\UNTIL{$\delta < \theta$}
		
		\STATE \textbf{Policy Improvement for Agent$_0$}
		\FOR{each state $s \in S$}
		\STATE $\pi_0(s) \leftarrow \arg\max\limits_{a \in actions_0} \sum\limits_{s', r} p(s', r|s, a) [r + \gamma V_0(s')]$
		\ENDFOR
		
		\STATE \textbf{Less Greedy Policy Evaluation for Agent$_1$}
		\REPEAT
		\STATE $\delta \leftarrow 0$
		\FOR{each state $s \in S$}
		\STATE $v \leftarrow V_1(s)$
		\STATE $V_1(s) \leftarrow \sum\limits_{s', r} p(s', r|s, \pi_1(s)) [r + \gamma V_1(s')]$
		\STATE $\delta \leftarrow \max(\delta, |v - V_1(s)|)$
		\ENDFOR
		\UNTIL{$\delta < \theta$}
		
		\STATE \textbf{Less Greedy Policy Improvement for Agent$_1$}
		\FOR{each state $s \in S$}
		\IF{rand() $\le \epsilon$}
		\STATE $\pi_1(s) \leftarrow$ random choice from $actions_1$
		\ELSE
		\STATE $\pi_1(s) \leftarrow \arg\max\limits_{a \in actions_1} \sum\limits_{s', r} p(s', r|s, a) [r + \gamma V_1(s')]$
		\ENDIF
		\ENDFOR
		
		\STATE \textbf{Check for overall convergence}
		\IF{convergence\_criteria\_met($\pi_0, \pi_1$)}
		\STATE \textbf{break}
		\ENDIF
		\ENDFOR
		
		\STATE Return final policies $\pi_0$ and $\pi_1$
	\end{algorithmic}
\end{algorithm}

This algorithm effectively balances exploration and exploitation, reducing the switching cost for Agent$_0$ and mitigating convergence to local optimal policies. The less greedy approach ensures that the agents cooperatively optimize their policies.

	\subsection{The Revised Police Improvement Procedure for Agent$_0$}

		As previously discussed, frequent switching of the control policy for Agent$_0$ (i.e., the patient) is not suitable. To reduce the switching frequency and the possibility of oscillation, the Revised Policy Improvement Procedure \ref{prd_1} proposed in \cite{guo2024cooperative} could be adopted by find a suitable threshold value $\eta$.
		
		\begin{procedure} \label{prd_1} (Revised Policy Improvement Procedure \cite{guo2024cooperative})
			\textbf{Policy Improvement}\\
			$policy\_stable \longleftarrow true$\\
			\textbf{For each} $s \in S$, $k$, \textbf{and a given} $\eta$:
			\begin{itemize}
				\item $temp \longleftarrow \pi_{k}(s)$
				\item \textbf{Under policy} $\pi_{k}$ \textbf{calculate}
				\[
				J_k= \sum_{s',r} p(s',r|s,a)[r+\gamma V(s')]
				\]
				\item \textbf{If} $\underset{a}{\max} \left( \sum_{s',r} p(s',r|s,a)[r+\gamma V(s')] \right) - J_k \ge \eta$
				\[
				\pi(s) \longleftarrow \underset{a}{\arg\max} \left( \sum_{s',r} p(s',r|s,a)[r+\gamma V(s')] \right)
				\]
				\item \textbf{If} $temp \ne \pi(s)$, \textbf{then} $policy\_stable \longleftarrow false$
			\end{itemize}
			\textbf{If} $policy\_stable$, \textbf{then stop and return} $V$ \textbf{and} $\pi$; \textbf{else go to Policy Evaluation step}.
		\end{procedure}
		
		\begin{theorem} \label{m_theorem} \cite{guo2024cooperative}
			For a CAMDP, assuming it is ergodic under all control policies, if Agent$_0$ is adjusted according to the Revised Policy Improvement (Procedure \ref{prd_1}) and Agent$_1$ performs under the control policy $\pi_1^j$, then the value loss $\boldsymbol{\delta V}$ will be less than $\eta [\boldsymbol{I} - \gamma \boldsymbol{P}^{\pi^*}]^{-1} \boldsymbol{1}$, where $\boldsymbol{P}^{\pi^*}$ is the state probability transition matrix under the optimal policy $\pi^*$, and $\boldsymbol{1}$ is the all-one vector.
		\end{theorem}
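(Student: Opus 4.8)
The plan is to mirror the argument already used in the proof of Theorem \ref{Th_001}, but to replace the exact greedy-improvement inequality with the $\eta$-thresholded stopping condition of Procedure \ref{prd_1}, and then to control the resulting value gap through the same nonnegative Neumann series. Throughout, Agent$_1$ is held fixed at $\pi_1^j$, so the whole system reduces to a single-agent MDP for Agent$_0$. I would write $\pi$ for the policy returned by Procedure \ref{prd_1}, $\boldsymbol{V}^{\pi}$ for its converged value function from the Value Evaluation step, and $\pi^*$ for the genuinely optimal policy for Agent$_0$ against $\pi_1^j$, so that the value loss is $\boldsymbol{\delta V} = \boldsymbol{V}^{\pi^*} - \boldsymbol{V}^{\pi}$.

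First I would extract the consequence of termination. Procedure \ref{prd_1} halts with $policy\_stable = true$, which means that at every state $s_i$ no action beats the current one by the margin $\eta$; since the converged evaluation satisfies $V_i^{\pi} = r_i^{\pi} + \gamma \sum_{j} p_{ij}^{\pi} V_j^{\pi}$, the stopping test reads
\[
\max_{a}\left( r_i^{a} + \gamma \sum_{j} p_{ij}^{a} V_j^{\pi} \right) - V_i^{\pi} < \eta, \qquad \forall i.
\]
In particular, selecting the action $a = \pi^*(s_i)$ prescribed by the optimal policy can only lower the left-hand side, so
\[
r_i^{\pi^*} + \gamma \sum_{j} p_{ij}^{\pi^*} V_j^{\pi} - V_i^{\pi} < \eta, \qquad \forall i.
\]

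Next I would turn this into a recursion for the loss. Subtracting $V_i^{\pi}$ from the Bellman identity $V_i^{\pi^*} = r_i^{\pi^*} + \gamma \sum_{j} p_{ij}^{\pi^*} V_j^{\pi^*}$ and inserting $\pm\, \gamma \sum_{j} p_{ij}^{\pi^*} V_j^{\pi}$ splits the gap into the thresholded term above plus $\gamma \sum_{j} p_{ij}^{\pi^*}(V_j^{\pi^*} - V_j^{\pi})$, giving componentwise
\[
\delta V_i < \eta + \gamma \sum_{j} p_{ij}^{\pi^*}\, \delta V_j,
\]
that is, $(\boldsymbol{I} - \gamma \boldsymbol{P}^{\pi^*}) \boldsymbol{\delta V} < \eta \boldsymbol{1}$. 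Finally, exactly as in Theorem \ref{Th_001}, I would invoke that $(\boldsymbol{I} - \gamma \boldsymbol{P}^{\pi^*})^{-1} = \sum_{k=0}^{\infty} \gamma^{k} (\boldsymbol{P}^{\pi^*})^{k}$ has only nonnegative entries (ergodicity together with $\gamma < 1$ guarantees convergence), so that left-multiplication preserves the inequality and yields $\boldsymbol{\delta V} < \eta (\boldsymbol{I} - \gamma \boldsymbol{P}^{\pi^*})^{-1} \boldsymbol{1}$, the claimed bound.

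The main obstacle, and really the only delicate point, is justifying that the per-state slack $\eta$ may be propagated with the optimal kernel $\boldsymbol{P}^{\pi^*}$ rather than the stopped one, and that the concluding matrix multiplication is monotone. The former is handled by the ``choose $a = \pi^*(s_i)$'' substitution above, which holds coordinatewise by the quantifier $\forall i$; the latter rests on the nonnegativity of the Neumann series already exploited in Theorem \ref{Th_001}, where I would note that the $k=0$ term $\boldsymbol{I}$ makes the inequality strict, matching the ``less than'' in the statement.
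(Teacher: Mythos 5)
The paper does not actually contain a proof of Theorem \ref{m_theorem}: it is stated here as an imported result from \cite{guo2024cooperative}, so there is no in-paper argument to compare against. Your proof is correct on its own terms and is exactly the argument one would expect given the machinery of this paper: you correctly read off from the termination condition of Procedure \ref{prd_1} that $\max_a Q^{\pi}(s_i,a)-V_i^{\pi}<\eta$ at every state (valid for $\eta>0$, which the bound implicitly requires), specialize to $a=\pi^*(s_i)$, and close the recursion with the same nonnegative Neumann series $[\boldsymbol{I}-\gamma\boldsymbol{P}^{\pi^*}]^{-1}=\sum_{k\ge 0}\gamma^k(\boldsymbol{P}^{\pi^*})^k$ used in the proof of Theorem \ref{Th_001}; your remark that the $k=0$ identity term preserves strictness is a nice touch. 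The only point left implicit is that Procedure \ref{prd_1} terminates at all (finitely many policies plus a strict improvement of at least $\eta$ at each switch gives this), but the theorem as stated presupposes termination, so this is a minor omission rather than a gap.
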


\subsection{The less greedy Policy Improvement Algorithm for Agent$_1$}	
Although Procedure \ref{prd_1} has significant potential to enhance the likelihood of convergence for the overall system and reduce the switching frequency of Agent$_0$, it still does not guarantee convergence to the global Nash Equilibrium (NE). To address this, we propose a less greedy approach (see Algorithm \ref{alg:coop_mdp}) for Agent$_1$ to improve the system’s chances of reaching the global NE.

		\subsection{Reducing Switching Frequency through Model Simplification}

As discussed in \cite{yu2019convergent} and \cite{bai2019provably}, the upper bound of the local switching cost in a Markov Decision Process (MDP) is \( N_a N_s \), where \( N_a \) represents the number of actions and \( N_s \) represents the number of states. To lower the switching rate between policies, one potential approach is to reduce the model's complexity by minimizing the number of states and actions.

In this context, we utilize the reward function to guide the complexity reduction of CaMDPs. Specifically, we employ a "pruning" strategy. Unlike existing techniques \cite{bicego2003sequential}, \cite{lin2017runtime}, \cite{tan2021towards}, \cite{gupta2020learning}, and \cite{xu2021survey}, our focus is on reducing the size of the policy-oriented state set, which is guided by sensitivity analysis concerning the value function.

Before proceeding, we introduce the notation for the policy via a simple CaMDP model. Assume the system consists of three states: \(s_0\), \(s_s\), and \(s_1\), and two actions: \(a_0\) and \(a_1\). The policies are defined as \(\pi_0(s_0, s_s)\) for action \(a_0\), and \(\pi_1(s_1, s_s)\) for action \(a_1\).
	
	Specifically, for example,  we define the state-action pairs for Agent$_0$ as:
	\begin{equation*}
		\left[
		\begin{array}{ccccc}
			State: &\{s_{0_0}, s_{s_0}\}& \{s_{0_0}, s_{s_1}\}& \{s_{0_1}, s_{s_0}\}& \{s_{0_1}, s_{s_1}\} \\
			a_0:& 1& 1& 1& 1
		\end{array}
		\right].
	\end{equation*}
	
	Similarly, for Agent$_1$:
	\begin{equation*}
		\left[
		\begin{array}{ccccc}
			State: &\{s_{1_0}, s_{s_0}\}& \{s_{1_0}, s_{s_1}\}& \{s_{1_1}, s_{s_0}\}& \{s_{1_1}, s_{s_1}\} \\ a_1:& 1& 1& 0& 0
		\end{array}
		\right].
	\end{equation*}
	
For simplicity, when no ambiguity arises, we denote the policy as: \(\pi_0 = [1\, 1\, 1\, 1]\) and \(\pi_1 = [1\, 1\, 0\, 0]\), or even \([1\, 1\, 1\, 1]\) \([1\, 1\, 0\, 0]\) for short.

It is important to emphasise that in order to evaluate a policy, we must consider both the policies of Agent$_0$ and Agent$_1$ simultaneously, as the value evaluation algorithm requires the policies of both agents.

We discuss two cases of policy simplification: 

i) Policy Pruning:\\ 
For example, to reduce the policy \([1\, 1\, 1\, 1]\) for Agent\(_0\), we evaluate the value function for the following policy set:
$\{[1\,1\,1\,1] [0\, 0\, 0\, 0]; [1\,1\,1\,1] [0\, 0\, 0\, 1]; [1\,1\,1\,1] [0\, 0\, 1\, 0]; \cdots;$ $ \cdots; [1\,1\,1\,1] [1\, 1\, 1\, 1]\}$.
If the maximum value from this set is low, i.e.,
\[
\max \left\{ V([1\, 1\, 1\, 1], [0\, 0\, 0\, 0]), \dots, V([1\, 1\, 1\, 1], [1\, 1\, 1\, 1]) \right\} \leq \epsilon,
\]
we can ignore the policy \([1\, 1\, 1\, 1]\) during the Policy Improvement process.

ii) State Reduction in Policy:\\ 
Since the policy is a function of states, in order to reduce the policy by observing fewer states, we must investigate all policies associated with the states to be removed. In this example, for Agent\(_0\), to eliminate the dependency on state \(s_s\), we need to evaluate the values of all policies for Agent\(_0\), where actions differ based on state \(s_s\), while \(s_0\) remains the same. The relevant policy set is denoted as \(\Pi_0^{S_s}\), which includes:
\begin{equation}
	\begin{aligned}
		\Pi_0^{S_s} = & \{ [0\, 0\, 0\, 1], [0\, 0\, 1\, 0], [0\, 1\, 0\, 0], [0\, 1\, 0\, 1], [0\, 1\, 1\, 0], [0\, 1\, 1\, 1], \\
		& [1\, 0\, 0\, 0], [1\, 0\, 0\, 1], [1\, 0\, 1\, 0], [1\, 0\, 1\, 1], [1\, 1\, 0\, 1], [1\, 1\, 1\, 0] \}.
	\end{aligned}
\end{equation}

If the values for all these policies combined with the policies of Agent\(_1\) are low, i.e.,
\begin{equation}
	\bar{V}^{\Pi_0^{S_s}} = \max_{\pi \in \Pi_0^{S_s}} V^\pi \leq \epsilon,
\end{equation}
we can safely simplify the policy by removing the dependency on state \(s_s\), thus reducing \(\pi_0(s_0, s_s)\) to \(\pi_0(s_0)\).

Alternatively, if the difference in values among all the policies in \(\Pi_0^{S_s}\) is small, i.e., for a small \(\epsilon\),
\begin{equation}
	\Delta V^{\Pi_0^{S_s}} = \max_{\pi \in \Pi_0^{S_s}} V^\pi - \min_{\pi \in \Pi_0^{S_s}} V^\pi \leq \epsilon,
\end{equation}
we can also simplify the policy \(\pi_0(s_0, s_s)\) to \(\pi_0(s_0)\).

Based on the above discussion, we summarize the following two approaches for reducing complexity of the CaMDPs.

	\begin{approach}\label{Pr_mrd1}
		\textbf{(Direct Pruning Based on Value Functions)}
		
		This approach can be applied both online and offline.
		
		\textbf{Online Implementation:} We can estimate the potential value loss online by adopting a method similar to the one described in the proof of Theorem \ref{m_theorem}. Specifically, we evaluate the effect of removing a subset of policies that are either difficult for Agent$_0$ (the patient) to execute or infeasible to switch between due to practical constraints. This pruning is based on real-time updates to value functions during learning.
		
		\textbf{Offline Implementation:} Assuming the availability of the transition probability matrices and reward functions for the system, we use standard value evaluation methods. By applying Lemma \ref{Lm2}, we compute both the discounted and undiscounted value functions. Policies with low value estimates, which form the bottom set, are pruned from the policy set to reduce complexity in decision-making.
		
	\end{approach}

%
%
%
%

\begin{approach}\label{Pr_mrd32}
	(State-selection based approach)
	
	This approach is designed for offline implementation. To simplify our discussion, we only consider the policy reduction for Agent$_0$, and assume the associated policies for Agent$_1$ have been augmented already.
	
	 There are two ways to reduce the number of states. The first one as we discussed before, we can select all the policies to be removed due to the violation of the constraint of the reduction of the states. 
	 
	 The second approach is we directly select a specific subset of states $S_\delta =\{s_{\delta 1}, s_{\delta 2}, \cdots, s_{\delta m} \}$  ( \( S_\delta  \subset S \) ), we collect all the associated policies with this specifically selected subset of the states $\Pi^{S_\delta}$,  we then compute the value associated with these policies. 
	  That is, we use the value function evaluation procedure to calculate the state-value functions \( V^{\Pi^{S_\delta}} \) for the set of policies $\Pi^{S_\delta}$. The value loss \( \Delta V^{\Pi^{S_\delta}} \) due to the shink of the policy set can be expressed as:
	\[
	\Delta V^{\Pi^{S_\delta}} = \max_{\pi \in \Pi^{S}} V^\pi - \max_{\pi \in \Pi^{S_\delta}}V^{\pi}.
	\]
	If the value loss \( \Delta V^{\Pi^{S_\delta}} \) is acceptably small, the policy can be simplified to the subset \( \Pi^{S_\delta} \), resulting in the modified policy $\pi(s_\delta)$.

\end{approach}

\begin{note}
It should be noted the construction of the policy subset $\Pi^{S_\delta}$, in some cases, is easy. We can simply add the constraint to original policies set. For example, for the selection of only $s_0$, i.e., $\pi_0(s_0)$, if we use 
$[p_1 \, p_2 \, p_2 \, p_4]$ 
we can simply impose the following constraint to the policy of Agent$_0$:
$$
\left\{
\begin{aligned}
	p_1 &= p_2 \\
	p_3 &= p_4.
\end{aligned}
\right.
$$

On the other hand, for the selection of only $s_s$, i.e., $\pi_0(s_s)$, we can simply impose the following constraint to the policy of Agent$_0$:
$$
\left\{
\begin{aligned}
	p_1 &= p_3 \\
	p_2 &= p_4.
\end{aligned}
\right.
$$
\end{note}

%
%
%

			\section{Numeral Analysis} \label{nu_e}
			
			\subsection{Comparable of value functions by increasing discount factor $\gamma$} \label{subsgamma}
			
%
			In order to compare the value of the policies without considering the state influence, in Lemma \ref{Lm2}, we prove that for a particular policy, increasing the discount factor $\gamma$ can make the values of different states as close as desired. To demonstrate this, we randomly generated a CaMDP with 8 states. By increasing the discount factor $\gamma$, we observe that the values of each policy for different states increase, but the overlapping of values due to different states is diminished. Below, we list the values of two different policies for all 8 states at selected values of $\gamma$, as shown in Table \ref{tablevf} and Fig. \ref{gamma}.

			\begin{table*}[h]
				\centering
				\caption{Value comparison for two policies with different $\gamma$ values}
				\label{tablevf}
				\begin{tabular}{|c|c|c|c|c|c|c|c|c|}
					\hline
					State & P1 (Policy 1) ($\gamma = 0.5$) & P2 (Policy 2) ($\gamma = 0.5$) & P1 ($\gamma = 0.75$) & P2 ($\gamma = 0.75$) & P1 ($\gamma = 0.95$) & P2 ($\gamma = 0.95$) & P1 ($\gamma = 0.998$) & P2 ($\gamma = 0.998$) \\ \hline
					1     & 0.3340                     & 0.2832                     & 0.7279                     & 0.6735                     & 3.9038                     & 3.7969                     & 99.2792                    & 97.5156                    \\ \hline
					2     & 0.3416                     & 0.3202                     & 0.7354                     & 0.7208                     & 3.9113                     & 3.8541                     & 99.2867                    & 97.5753                    \\ \hline
					3     & 0.4179                     & 0.3590                     & 0.8093                     & 0.7477                     & 3.9835                     & 3.8703                     & 99.3586                    & 97.5888                    \\ \hline
					4     & 0.4238                     & 0.4178                     & 0.8217                     & 0.8018                     & 3.9959                     & 3.9189                     & 99.3791                    & 97.6360                    \\ \hline
					5     & 0.4283                     & 0.4191                     & 0.8243                     & 0.8055                     & 4.0023                     & 3.9265                     & 99.3917                    & 97.6447                    \\ \hline
					6     & 0.4304                     & 0.4201                     & 0.8297                     & 0.8141                     & 4.0158                     & 3.9399                     & 99.3938                    & 97.6581                    \\ \hline
					7     & 0.4349                     & 0.4286                     & 0.8369                     & 0.8188                     & 4.0268                     & 3.9421                     & 99.4048                    & 97.6617                    \\ \hline
					8     & 0.4409                     & 0.4322                     & 0.8407                     & 0.8250                     & 4.0380                     & 3.9526                     & 99.4081                    & 97.6726                    \\ \hline
				\end{tabular}
			\end{table*}

			\subsection{Conditions validation of Convergence and the uniqueness of the NE point}  \label{subs_convergence}
			This section will present the numerical analysis of the proposed conclusion regarding convergence and the uniqueness of the NE point. 
			
			First, we randomly generated $1000$ CaMDPs. For each system of these CaMDPs, we checked three conditions: (1) the maximum value appears in the same column or row (i.e., Condition (\ref{maxC}) or (\ref{maxR}) of Theorem \ref{Th_002}); (2) the observability condition (i.e., the condition of Theorem \ref{Th_001}). (3) Under different initial conditions the system converges to the global optimal NE.  
			 
		In 1000 simulations, there are 150 systems satisfy Condition (1), 516 systems satisfy Condition (2),  and 652 systems satisfy Condition (3). Specifically, all the systems which satisfy both Conditions (1) and (2) satisfy Condition (3), which is consistent with the conclusion of Theorem \ref{Th_002}.
	
		\subsection{$\epsilon$-greedy} 	
	To test our proposed new policy improvement strategy, we randomly selected one of the CaMDPs ($M=<S, \, A_0,\, A_1, P_0, P_1,R_0, R_1>$ ), which did not satisfy Condition (1) and (3), but satisfy Condition (2), with details as follows:

			The probability transition matrices under different control actions for each subsystem are as follows:
			
			\begin{equation*}
				P_0(S_0, a_{0_0}, S_0) = \begin{bmatrix} 
					0.72896067  & 0.27103933 \\ 
					0.95167994  & 0.04832006 
				\end{bmatrix};
			\end{equation*}
			
			\begin{equation*}
				P_s(S_s, a_{0_0}, a_{1_0}, S_s) = \begin{bmatrix} 
					0.66489771  & 0.33510229 \\ 
					0.51544335  & 0.48455665 
				\end{bmatrix};
			\end{equation*}
			
			\begin{equation*}
				P_0(S_0, a_{0_1}, S_0) = \begin{bmatrix} 
					0.15320242  & 0.84679758 \\ 
					0.55851098  & 0.44148902 
				\end{bmatrix};
			\end{equation*}
			
			\begin{equation*}
				P_s(S_s, a_{0_0}, a_{1_1}, S_s) = \begin{bmatrix} 
					0.07046136  & 0.92953864 \\ 
					0.52137167  & 0.47862833 
				\end{bmatrix};
			\end{equation*}
			
			\begin{equation*}
				P_s(S_s, a_{0_1}, a_{1_0}, S_s) = \begin{bmatrix} 
					0.56727427  & 0.43272573 \\ 
					0.11531405  & 0.88468595 
				\end{bmatrix};
			\end{equation*}
			
			\begin{equation*}
				P_1(S_1, a_{1_0}, S_1) = \begin{bmatrix} 
					0.35013916  & 0.64986084 \\ 
					0.37319646  & 0.62680354 
				\end{bmatrix};
			\end{equation*}
			
			\begin{equation*}
				P_s(S_s, a_{0_1}, a_{1_1}, S_s) = \begin{bmatrix} 
					0.65019582  & 0.34980418 \\ 
					0.41909603  & 0.58090397 
				\end{bmatrix};
			\end{equation*}
			
			\begin{equation*}
				P_1(S_1, a_{1_1}, S_1) = \begin{bmatrix} 
					0.47227529  & 0.52772471 \\ 
					0.39457278  & 0.60542722 
				\end{bmatrix}.
			\end{equation*}

The reward functions are as follows:

\begin{equation*}
	R_0(S_0,a_{0_0},S_0)=\left[\begin{array}{cc}
		0.25561406  & 0.67130943 \\
		0.59900591  & 0.71733215
	\end{array}\right];
\end{equation*}

\begin{equation*}
	R_0(S_0,a_{0_1},S_0)=\left[\begin{array}{cc}
		0.93734953  & 0.35180977 \\
		0.25363410  & 0.40247251
	\end{array}\right];
\end{equation*}

\begin{equation*}
	R_s(S_s,a_{0_0},a_{1_0},S_s)=\left[\begin{array}{cc}
		0.39837292  & 0.77088097 \\
		0.76475098  & 0.28385938
	\end{array}\right];
\end{equation*}

\begin{equation*}
	R_s(S_s,a_{0_0},a_{1_1},S_s)=\left[\begin{array}{cc}
		0.18954219  & 0.47125096 \\
		0.33480604  & 0.73473504
	\end{array}\right];
\end{equation*}

\begin{equation*}
	R_s(S_s,a_{0_1},a_{1_0},S_s)=\left[\begin{array}{cc}
		0.18910712  & 0.33110407 \\
		0.84422842  & 0.61502403
	\end{array}\right];
\end{equation*}

\begin{equation*}
	R_s(S_s,a_{0_1},a_{1_1},S_s)=\left[\begin{array}{cc}
		0.88526408  & 0.97655302 \\
		0.83690859  & 0.18082463
	\end{array}\right];
\end{equation*}

\begin{equation*}
	R_1(S_1,a_{1_0},S_1)=\left[\begin{array}{cc}
		0.74651072  & 0.72407057 \\
		0.40610780  & 0.98937985
	\end{array}\right];
\end{equation*}

\begin{equation*}
	R_1(S_1,a_{1_1},S_1)=\left[\begin{array}{cc}
		0.45049928  & 0.37380843 \\
		0.70962861  & 0.08245855
	\end{array}\right].
\end{equation*}
			
			When the original policy of  is $\{0,0,0,0\}$, and the agent$_1$ is $\{1,0,0,0\}$, then according to the standard policy improvement algorithm the system converges to local NE point  $\{1,1,0,0\}$ and $\{1,0,1,0 \}$, which is the second highest value (9.81). Then we adapt the proposed method to this system, after the several round of policy $\epsilon$-greedy it converge to the global maximum NE point (9.99).
						
			\begin{figure}[ht]
					\vspace{-2.5cm}
					\hspace*{-2.0cm}
				\centering{\includegraphics[width=0.75\textwidth]{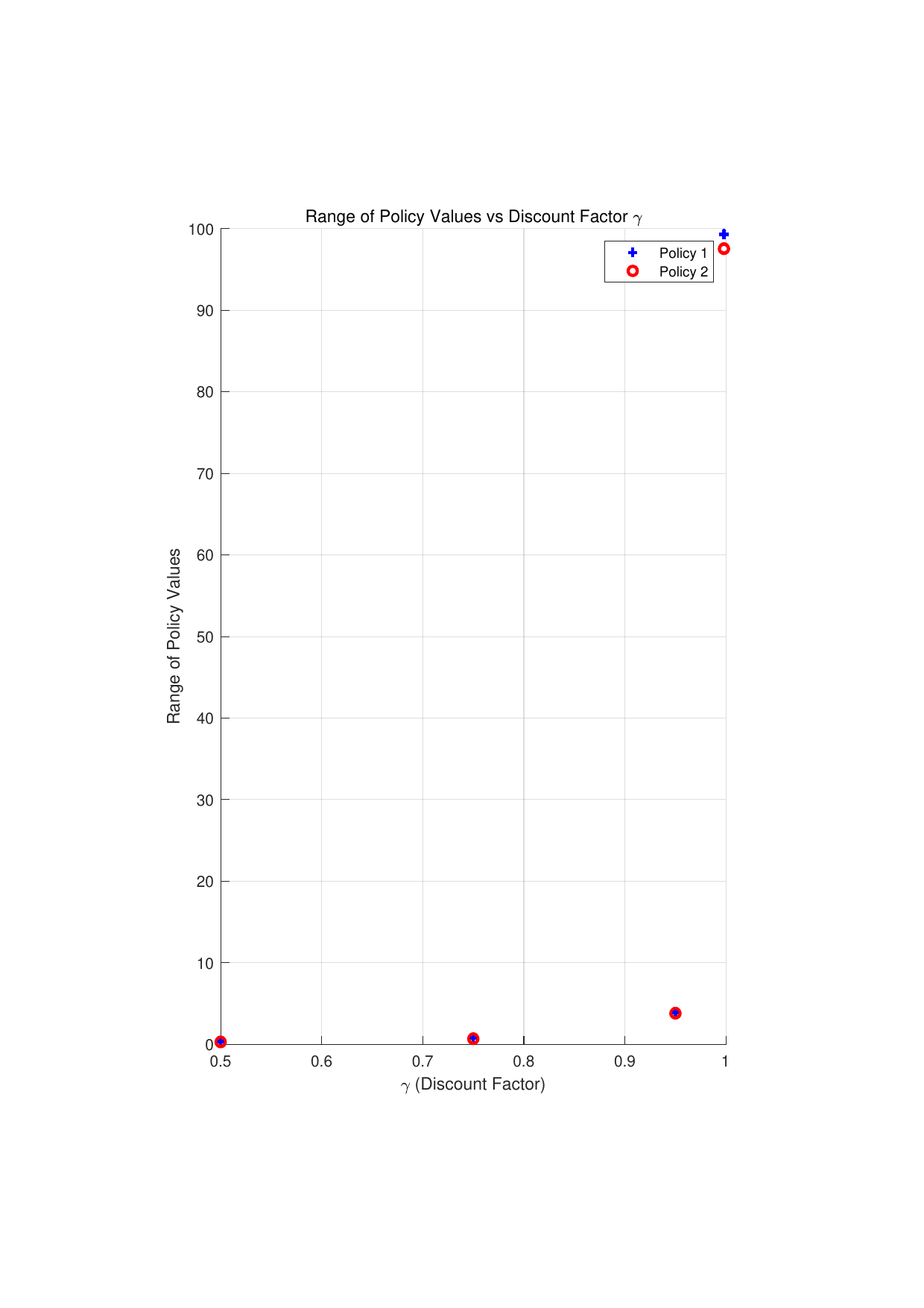}}\\
				\vspace{-3.2cm}
			\caption{Value vs Discount Factor $\gamma$.}\label{gamma}
			\end{figure}
			
%
			
			\subsection{Policy switching frequency Reduction via policy simplification}
			
			The numerical study in this subsection focuses on the Approach \ref{Pr_mrd32}. All the probability transition matrices and the reward
			functions for each policy $\pi_i$ are all the same as before.
			
			In order to reduce the switching frequency of the policy of the patient (i.e., Agent$_0$), we assume only one state is applied for the design of the control policy of Agent$_0$ first. In this case,  when only the policy is a feedback of the state $s_s$ (i.e., $\pi_0(s_s)$ by $p_1=p_3, p_2=p_4$) then, the number of policies for Agent$_0$ has been reduced from $16$ to $4$, and the best discounted value only reduced from $9.99$ to $ 9.05$. Moreover, when only the policy of Agent$_0$ is a feedback of the state $s_0$ (i.e., $\pi_0(s_0)$ by $p_1=p_2, p_3=p_4$), the number of policies for Agent$_0$ has also been reduced from $16$ to $4$ and the best discounted value does not change much ($9.81$). This clearly shows that the proposed strategy can sufficiently reduce the number of policy switching with the acceptable best discounted value.

\section{Conclusion}
\label{sec:Conclusion}
This study introduced a novel Cooperative Adaptive Markov Decision Process (CAMDP) framework to enhance the interactive learning process in robot-assisted rehabilitation. By formulating a dual-agent learning model, we established theoretical conditions for convergence and provided insights into ensuring the uniqueness of Nash equilibrium points. Our findings demonstrate that under specific conditions, the proposed CAMDP framework guarantees convergence to a stable equilibrium, reducing the risk of policy oscillations and enhancing system robustness.

We explored various policy improvement strategies, particularly focusing on alternative policy updating patterns and less greedy approaches to enhance adaptability while mitigating convergence to suboptimal equilibria. The revised policy improvement strategy for Agent$_0$ (patient) aims to reduce unnecessary switching frequency, ensuring a smoother rehabilitation process. Additionally, we implemented a modified policy update method for Agent$_1$ (robot), allowing for a more adaptive and patient-centered rehabilitation experience.

Our numerical analysis validated the effectiveness of these strategies by examining randomly generated CAMDPs. The results confirmed that the proposed convergence conditions significantly improve the likelihood of reaching the global optimal Nash equilibrium while reducing policy oscillations. Furthermore, the integration of an $\epsilon$-greedy approach for policy updates demonstrated its potential in enhancing learning efficiency while balancing exploration and exploitation.

Overall, this study contributes to the development of more adaptive and cooperative reinforcement learning models for human-machine interaction in rehabilitation settings. Future work will focus on extending these findings to more complex environments and real-world applications, incorporating additional patient-specific factors to refine the adaptability of the proposed CAMDP framework. Additionally, further studies may explore deep reinforcement learning techniques to enhance the scalability and generalizability of the model in diverse rehabilitation scenarios.

%

%






\bibliographystyle{IEEEtran}
\bibliography{main}

\begin{thebibliography}{10}
\providecommand{\url}[1]{#1}
\csname url@samestyle\endcsname
\providecommand{\newblock}{\relax}
\providecommand{\bibinfo}[2]{#2}
\providecommand{\BIBentrySTDinterwordspacing}{\spaceskip=0pt\relax}
\providecommand{\BIBentryALTinterwordstretchfactor}{4}
\providecommand{\BIBentryALTinterwordspacing}{\spaceskip=\fontdimen2\font plus
\BIBentryALTinterwordstretchfactor\fontdimen3\font minus \fontdimen4\font\relax}
\providecommand{\BIBforeignlanguage}[2]{{%
\expandafter\ifx\csname l@#1\endcsname\relax
\typeout{** WARNING: IEEEtran.bst: No hyphenation pattern has been}%
\typeout{** loaded for the language `#1'. Using the pattern for}%
\typeout{** the default language instead.}%
\else
\language=\csname l@#1\endcsname
\fi
#2}}
\providecommand{\BIBdecl}{\relax}
\BIBdecl

\bibitem{stevanovic2022joint}
M.~Stevanovic, T.~Valkeap{\"a}{\"a}, E.~Weiste, and C.~Lindholm, ``Joint decision making in a mental health rehabilitation community: the impact of support workers’ proposal design on client responsiveness,'' \emph{Counselling Psychology Quarterly}, vol.~35, no.~1, pp. 129--154, 2022.

\bibitem{zhang2022reinforcement}
R.~Zhang, Q.~Lv, J.~Li, J.~Bao, T.~Liu, and S.~Liu, ``A reinforcement learning method for human-robot collaboration in assembly tasks,'' \emph{Robotics and Computer-Integrated Manufacturing}, vol.~73, p. 102227, 2022.

\bibitem{mukherjee2022survey}
D.~Mukherjee, K.~Gupta, L.~H. Chang, and H.~Najjaran, ``A survey of robot learning strategies for human-robot collaboration in industrial settings,'' \emph{Robotics and Computer-Integrated Manufacturing}, vol.~73, p. 102231, 2022.

\bibitem{doll2012ubiquity}
B.~B. Doll, D.~A. Simon, and N.~D. Daw, ``The ubiquity of model-based reinforcement learning,'' \emph{Current opinion in neurobiology}, vol.~22, no.~6, pp. 1075--1081, 2012.

\bibitem{guo2024cooperative}
K.~Guo, A.~Cheng, Y.~Li, J.~Li, R.~Duffield, and S.~W. Su, ``Cooperative markov decision process model for human--machine co-adaptation in robot-assisted rehabilitation,'' \emph{Knowledge-Based Systems}, vol. 291, p. 111572, 2024.

\bibitem{bai2019provably}
Y.~Bai, T.~Xie, N.~Jiang, and Y.-X. Wang, ``Provably efficient q-learning with low switching cost,'' \emph{arXiv preprint arXiv:1905.12849}, 2019.

\bibitem{yang2020overview}
Y.~Yang and J.~Wang, ``An overview of multi-agent reinforcement learning from game theoretical perspective,'' \emph{arXiv preprint arXiv:2011.00583}, 2020.

\bibitem{moerland2020model}
T.~M. Moerland, J.~Broekens, and C.~M. Jonker, ``Model-based reinforcement learning: A survey,'' \emph{arXiv preprint arXiv:2006.16712}, 2020.

\bibitem{qingji2008robot}
G.~Qingji, W.~Kai, and L.~Haijuan, ``A robot emotion generation mechanism based on pad emotion space,'' in \emph{International Conference on Intelligent Information Processing}.\hskip 1em plus 0.5em minus 0.4em\relax Springer, 2008, pp. 138--147.

\bibitem{wang2007emotion}
G.~Wang, Z.~Wang, S.~Teng, Y.~Xie, and Y.~Wang, ``Emotion model of interactive virtual humans on the basis of mdp,'' \emph{Frontiers of Electrical and Electronic Engineering in China}, vol.~2, no.~2, pp. 156--160, 2007.

\bibitem{maadi2021review}
M.~Maadi, H.~Akbarzadeh~Khorshidi, and U.~Aickelin, ``A review on human--ai interaction in machine learning and insights for medical applications,'' \emph{International journal of environmental research and public health}, vol.~18, no.~4, p. 2121, 2021.

\bibitem{wang2022pruning}
L.~Wang, W.~Huang, M.~Zhang, S.~Pan, X.~Chang, and S.~W. Su, ``Pruning graph neural networks by evaluating edge properties,'' \emph{Knowledge-Based Systems}, vol. 256, p. 109847, 2022.

\bibitem{fleming1961convergence}
W.~H. Fleming, ``The convergence problem for differential games,'' \emph{J. Math. Anal. Appl}, vol.~3, no.~1, pp. 102--116, 1961.

\bibitem{lewis2012reinforcement}
F.~L. Lewis, D.~Vrabie, and K.~G. Vamvoudakis, ``Reinforcement learning and feedback control: Using natural decision methods to design optimal adaptive controllers,'' \emph{IEEE Control Systems Magazine}, vol.~32, no.~6, pp. 76--105, 2012.

\bibitem{vamvoudakis2011non}
K.~G. Vamvoudakis and F.~L. Lewis, ``Non-zero sum games: Online learning solution of coupled hamilton-jacobi and coupled riccati equations,'' in \emph{2011 IEEE International Symposium on Intelligent Control}.\hskip 1em plus 0.5em minus 0.4em\relax IEEE, 2011, pp. 171--178.

\bibitem{LOZOVANU201113398}
D.~Lozovanu, S.~Pickl, and E.~Kropat, ``Markov decision processes and determining nash equilibria for stochastic positional games,'' \emph{IFAC Proceedings Volumes}, vol.~44, no.~1, pp. 13\,398--13\,403, 2011, 18th IFAC World Congress.

\bibitem{Avrachenkov2012}
K.~Avrachenkov, L.~Cottatellucci, and L.~Maggi, ``Cooperative markov decision processes: Time consistency, greedy players satisfaction, and cooperation maintenance,'' \emph{International Journal of Games Theory}, vol.~42, pp. 239--262, 02 2012.

\bibitem{sadhu2017improving}
A.~K. Sadhu and A.~Konar, ``Improving the speed of convergence of multi-agent q-learning for cooperative task-planning by a robot-team,'' \emph{Robotics and Autonomous Systems}, vol.~92, pp. 66--80, 2017.

\bibitem{zhang2023global}
Y.~Zhang, G.~Qu, P.~Xu, Y.~Lin, Z.~Chen, and A.~Wierman, ``Global convergence of localized policy iteration in networked multi-agent reinforcement learning,'' \emph{Proceedings of the ACM on Measurement and Analysis of Computing Systems}, vol.~7, no.~1, pp. 1--51, 2023.

\bibitem{leonardos2021global}
S.~Leonardos, W.~Overman, I.~Panageas, and G.~Piliouras, ``Global convergence of multi-agent policy gradient in markov potential games,'' \emph{arXiv preprint arXiv:2106.01969}, 2021.

\bibitem{song2019convergence}
X.~Song, T.~Wang, and C.~Zhang, ``Convergence of multi-agent learning with a finite step size in general-sum games,'' \emph{arXiv preprint arXiv:1903.02868}, 2019.

\bibitem{seierstad2014existence}
A.~Seierstad, ``Existence of open loop nash equilibria in certain types of nonlinear differential games,'' \emph{Optimization Letters}, vol.~8, pp. 247--258, 2014.

\bibitem{chenault1986uniqueness}
L.~A. Chenault, ``On the uniqueness of nash equilibria,'' \emph{Economics Letters}, vol.~20, no.~3, pp. 203--205, 1986.

\bibitem{block2022existence}
D.~Block and M.~R. Rivas, ``The existence and uniqueness of a nash equilibrium in mean field game theory,'' \emph{arXiv preprint arXiv:2210.10117}, 2022.

\bibitem{jank2003existence}
G.~Jank and H.~Abou-Kandil, ``Existence and uniqueness of open-loop nash equilibria in linear-quadratic discrete time games,'' \emph{IEEE Transactions on Automatic Control}, vol.~48, no.~2, pp. 267--271, 2003.

\bibitem{boyd2011distributed}
S.~Boyd, N.~Parikh, and E.~Chu, \emph{Distributed optimization and statistical learning via the alternating direction method of multipliers}.\hskip 1em plus 0.5em minus 0.4em\relax Now Publishers Inc, 2011.

\bibitem{aumann1974cooperative}
R.~J. Aumann and J.~H. Dreze, ``Cooperative games with coalition structures,'' \emph{International Journal of game theory}, vol.~3, no.~4, pp. 217--237, 1974.

\bibitem{oliehoek2016concise}
F.~A. Oliehoek and C.~Amato, \emph{A concise introduction to decentralized POMDPs}.\hskip 1em plus 0.5em minus 0.4em\relax Springer, 2016.

\bibitem{goldman2004decentralized}
C.~V. Goldman and S.~Zilberstein, ``Decentralized control of cooperative systems: Categorization and complexity analysis,'' \emph{Journal of artificial intelligence research}, vol.~22, pp. 143--174, 2004.

\bibitem{pomdpssynthesis}
R.~Nair, P.~Varakantham, M.~Tambe, and M.~Yokoo, ``Synthesis of distributed constraint optimization and pomdps,'' in \emph{Proceedings of the AAAI}.

\bibitem{dobbe2017fully}
R.~Dobbe, D.~Fridovich-Keil, and C.~Tomlin, ``Fully decentralized policies for multi-agent systems: An information theoretic approach,'' \emph{arXiv preprint arXiv:1707.06334}, 2017.

\bibitem{moerland2023model}
T.~M. Moerland, J.~Broekens, A.~Plaat, C.~M. Jonker \emph{et~al.}, ``Model-based reinforcement learning: A survey,'' \emph{Foundations and Trends{\textregistered} in Machine Learning}, vol.~16, no.~1, pp. 1--118, 2023.

\bibitem{howard1960dynamic}
R.~A. Howard, \emph{Dynamic Programming and Markov Processes}.\hskip 1em plus 0.5em minus 0.4em\relax John Wiley \& Sons, 1960.

\bibitem{yu2019convergent}
M.~Yu, Z.~Yang, M.~Kolar, and Z.~Wang, ``Convergent policy optimization for safe reinforcement learning,'' \emph{Advances in Neural Information Processing Systems}, vol.~32, pp. 3127--3139, 2019.

\bibitem{bicego2003sequential}
M.~Bicego, V.~Murino, and M.~A. Figueiredo, ``A sequential pruning strategy for the selection of the number of states in hidden markov models,'' \emph{Pattern Recognition Letters}, vol.~24, no. 9-10, pp. 1395--1407, 2003.

\bibitem{lin2017runtime}
J.~Lin, Y.~Rao, J.~Lu, and J.~Zhou, ``Runtime neural pruning,'' in \emph{Proceedings of the 31st International Conference on Neural Information Processing Systems}, 2017, pp. 2178--2188.

\bibitem{tan2021towards}
K.~Tan and D.~Wang, ``Towards model compression for deep learning based speech enhancement,'' \emph{IEEE/ACM Transactions on Audio, Speech, and Language Processing}, vol.~29, pp. 1785--1794, 2021.

\bibitem{gupta2020learning}
M.~Gupta, S.~Aravindan, A.~Kalisz, V.~Chandrasekhar, and L.~Jie, ``Learning to prune deep neural networks via reinforcement learning,'' \emph{arXiv preprint arXiv:2007.04756}, 2020.

\bibitem{xu2021survey}
J.~Xu, W.~Zhou, Z.~Fu, H.~Zhou, and L.~Li, ``A survey on green deep learning,'' \emph{arXiv preprint arXiv:2111.05193}, 2021.

\end{thebibliography}

\end{document}